\journal{Artificial Intelligence Journal}
\newtheorem{definition}{Definition}
\newtheorem{theorem}{Theorem}
\newcommand{\tup}[1]{{\langle #1 \rangle}}
\newcommand{\strips}{\textsc{Strips}}     
\begin{document}

\begin{frontmatter}

\title{Generalized Planning as Heuristic Search: A new planning search-space that leverages pointers over objects}

\author{Javier Segovia-Aguas\corref{mycorrespondingauthor}}
\cortext[mycorrespondingauthor]{Corresponding author}
\address{Universitat Pompeu Fabra}
\ead{javier.segovia@upf.edu}

\author{Sergio Jim\'enez}
\address{Universitat Polit\`ecnica de Val\`encia}
\ead{serjice@dsic.upv.es}

\author{Anders Jonsson}
\address{Universitat Pompeu Fabra}
\ead{anders.jonsson@upf.edu}

\begin{abstract}
{\em Planning as heuristic search} is one of the most successful approaches to classical planning but unfortunately, it does not extend trivially to {\em Generalized Planning} (GP). GP aims to compute algorithmic solutions that are valid for a set of classical planning instances from a given domain, even if these instances differ in the number of objects, the number of state variables, their domain size, or their initial and goal configuration. The generalization requirements of GP make it impractical to perform the state-space search that is usually implemented by heuristic planners. This paper adapts the {\em planning as heuristic search} paradigm to the generalization requirements of GP, and presents the first native heuristic search approach to GP. First, the paper introduces a new pointer-based solution space for GP that is independent of the number of classical planning instances in a GP problem and the size of those instances (i.e. the number of objects, state variables and their domain sizes). Second, the paper defines a set of evaluation and heuristic functions for guiding a combinatorial search in our new GP solution space. The computation of these evaluation and heuristic functions does not require grounding states or actions in advance. Therefore our {\em GP as heuristic search} approach can handle large sets of state variables with large numerical domains, e.g.~integers. Lastly, the paper defines an upgraded version of our novel algorithm for GP called Best-First Generalized Planning ({\sc BFGP}), that implements a {\em best-first search} in our pointer-based solution space, and that is guided by our evaluation/heuristic functions for GP. 
\end{abstract}

\begin{keyword}
Generalized planning\sep classical planning\sep heuristic search\sep planning and learning\sep domain-specific control knowledge\sep program synthesis\sep programming by example
\end{keyword}

\end{frontmatter}

\section{Introduction}
{\em Generalized planning} (GP) addresses the representation and computation of solutions that are valid for a set of classical planning instances from a given domain~\cite{Winner03distill:learning,Levesque:GPlanning:IJCAI11,srivastava2011new,Zilberstein:Gplanning:icaps11,hu2011generalized,illanes2019generalized,jimenez2019review,frances2021learning}. In the worst case, each classical planning instance may require a completely different solution. In practice, however, many classical planning domains are known to have polynomial algorithmic solutions~\cite{fern2011first}. In other words, one can compute a single compact general solution that exploits some common structure of the different classical planning instances in a given domain. Generalized plans are then not sequences of actions, but algorithmic solutions that supplement planning actions with control-flow constructs. For example, a generalized plan that solves any classical planning instance from the {\em blocksworld} domain~\cite{slaney2001blocks} can be compactly specified as follows: {\em put all the blocks on the table and then, in a proper order, move each block to its goal placement}. This generalized plan is able to solve any {\em blocksworld} instance, no matter the actual number, or identity of the blocks, and no matter the initial and goal configuration of the blocks. Note however that the knowledge represented in a given input set of classical planning instances may not be enough to specify an algorithmic solution that solves them all. For example, instances of the classical planning {\em blocksworld} domain do not include representation {\em features} for specifying whether {\em all blocks are on the table}, or for specifying {\em the proper order for moving the blocks to their goal placements}. A big challenge in GP is then to automatically discover the representation features that are key for computing a compact and general solution for a given set of planning instances. With this regard, researchers have proposed different {\em languages} for compactly represent GP solutions, and associated algorithms for computing a GP solution in a given language.

{\em Automated planning} has not achieved the level of integration with common programming languages, like {\sc C}, {\sc Java}, or {\sc Python}, that is  achieved by other forms of problem solving, such as {\em constraint satisfaction} or {\em operational research}~\cite{schulte2010modeling,prud2014choco,ortools}. An important  reason is the low-level representations traditionally used in planning for specifying problems and solutions~\cite{geffner2003pddl, smith2008anml,sanner2010relational,barreiro2012europa,rintanen2015impact}. Given the algorithmic kind of GP solutions, GP is a promising research direction to bridge the current gap between automated planning and programming. However, most of the work on GP still inherits the \strips\ representation, in which states are represented specifying the properties and relations of a set of objects, and where actions represent object manipulations. In this work we provide a pointer-based representation for GP problems and solutions, that is closer to common programming languages, and that applies also to the object-centered problems that traditionally are addressed in the automated planning community. In addition we show that our pointer-based representation allows us to adapt the {\em planning as heuristic search} paradigm to GP: given a GP problem that comprises a finite set of classical planning instances from a given domain,  our {\em GP as heuristic search} approach implements a combinatorial search to find a program that solves the full set of input instances. With our new pointer-based representation we are able to solve challenging programming tasks that were out of reach for previous top-down GP solvers.

{\em Heuristic search} is one of the most successful approaches to classical planning~\cite{ghallab2004automated,geffner2013concise}. The winners of the {\em International Planning Competition} (IPC) are often heuristic planners~\cite{vallati:IPC08:AIM}, and the workshop on {\em Heuristics and Search for Domain-Independent Planning} (HSDIP) is one of the discussion forums with the longest tradition at the {\em International Conference on Automated Planning and Scheduling (ICAPS)}, the major international conference for research on automated planning.
Briefly, the {\em planning as heuristic search} approach addresses the computation of sequential plans as a combinatorial search in the space of the states reachable from a single given initial state. This combinatorial search is usually implemented as a forward  search, guided by heuristics that are automatically extracted from the declarative representation of the planning problem.  There is a wide range of different heuristics for classical planning, but most of them are based on the notion of {\em relaxed plan}~\cite{helmert2009landmarks}.  The relaxed plan is a solution to a relaxation of the classical planning problem, which is computed assuming that goals (and action preconditions) are independent. The cost of the relaxed plan is an informative estimate of the actual cost-to-go for many classical planning problems, and its computation is much cheaper than the computation of the actual solution to the planning problem. 

In the last two decades a wide landscape of effective search algorithms, and heuristic functions, have been developed for classical planning~\cite{mcdermott1996heuristic,bonet2001planning,hoffmann2001ff,Helmert:FD:JAIR06,richter2010lama,lipovetzky2017best}. Unfortunately, search algorithms and heuristic functions from classical planning cannot be directly extended to GP. The computation of {\em relaxed plans}, as it is implemented by off-the-shelf heuristic planners, requires a pre-processing step for grounding states and actions~\cite{frances2015modeling}. On the other hand, GP solutions must be able to generalize to (possibly infinite) sets of classical planning instances, with different sets of state variables (i.e. state variables with different domain sizes and/or different number of state variables) as well as with different initial states and goals. These particular generalization requirements of GP make it impractical to ground states and actions and hence, to apply the state-space search and the cost-to-go estimates of heuristic planners.

With respect to previous work on GP, our heuristic search approach to GP introduces the following contributions:
\begin{enumerate}
    \item {\em A pointer-based representation for GP problems and solutions}. Our representation formalism is closer to common programming languages, and it also applies to  object-centered representations (like \strips) that are traditionally used in automated planning.
	\item {\em A tractable solution-space for GP}. We leverage the computational models of the {\em Random-Access Machine}~\cite{skiena1998algorithm} and the {\em Intel x86} FLAGS register~\cite{dandamudi2005installing} to define an innovative pointer-based search space for GP. Interestingly our new search space for GP is independent of the number of input planning instances in a GP problem, and the size of these instances (i.e. the number of objects, state variables, and their domain sizes).
	\item {\em Grounding-free evaluation/heuristic functions} for GP. We define several evaluation and heuristic functions to guide a combinatorial search in our solution space for GP.  Evaluating these functions does not require to ground states/actions in advance, so they apply to GP problems where state variables have large domains (e.g.~integers). 
	\item {\em A heuristic search algorithm for GP}. We present the {\sc BFGP} algorithm for GP that implements a best-first search in our GP solution-space, and that is guided by our evaluation and heuristic functions. 
	\item {\em A translator from the \strips\ fragment of PDDL to our pointer-based representation} for GP. We automate the representation change from PDDL to pointer-based, and show several solutions to planning domains from the {\em International Planning Competition} (IPC) \cite{vallati:IPC:AIM2015} which are validated on large random instances.
\end{enumerate}

A preliminary description of our {\em GP as heuristic search} approach previously appeared at an ICAPS conference paper~\cite{javi:GP:ICAPS21}. In this work we review and extend the seminal ideas presented in the conference paper, and provide a more exhaustive evaluation of our {\em GP as heuristic search} approach. Compared to the conference paper, the present paper includes the following novel material:
\begin{itemize}
    \item We formalize the notion of {\em pointer} over the objects 
    of a planning problem, and introduce a pointer-based formalization for planning action schemes, classical planning problems and solutions. We show that our pointer-based formalization directly applies to  object-centered planning problems that are traditionally addressed in automated planning. 
    \item We introduce the notion of {\em partially specified planning program}, that refers to the sketch of an algorithmic planning solution, and that allows to explain better our search algorithm and heuristics functions for GP. We also implemented new evaluation functions for guiding our {\em GP as heuristic search} approach.
    \item We provide theoretical results  of our heuristic search algorithm for GP, that include {\em termination}, {\em soundness}, {\em completeness}, and {\em complexity} proofs. We also extend the empirical evaluation, including more results at a wider landscape of planning domains, to characterize better the performance of our {\em GP as heuristic search} approach.
\end{itemize}

The paper is structured as follows:  Section~\ref{sec:background} presents the planning models we rely on in this work (namely the {\em classical planning} model and the {\em GP planning} model) and also presents {\em planning programs} and the {\em Random Access Machine},  the formalisms we leverage for the representation of our algorithmic planning solutions. Section~\ref{sec:pointers} shows how to extend the classical planning model with a set of pointers over objects, and the corresponding primitive operations for manipulating these pointers. This extension allows us to define, in an agnostic manner, a set of features and a set of actions for computing planning programs that can solve any  instance from a given planning domain. Section~\ref{sec:hsearch} describes our {\em GP as heuristic search} approach; the section provides details on our solution space, evaluation functions, and heuristic search algorithm for GP.  Section~\ref{sec:evaluation} presents the empirical evaluation of our approach and its comparison with the classical planning compilation for GP, that serves as a baseline. Finally, Section~\ref{sec:conclusions} wraps-up our work and discusses open issues and future work.

\subsection{Related Work}
Here we first review previous work on GP according to the following three dimensions: {\em problem representation}, {\em solution representation}, and {\em computational approach}. Then, we connect the research work on GP with other relevant areas in AI, such as {\em program synthesis}, {\em deep learning}, and {\em (deep) reinforcement learning}. Last, we list the features that distinguish our {\em GP as heuristic search approach} from  the reviewed related work.

Regarding {\em problem representation}, there are two different approaches for the specification of the set of classical planning instances that are comprised in a GP problem. The {\em explicit} approach, that enumerates every classical planning instance in a GP problem~\cite{segovia2019computing}, and the {\em implicit} approach, that defines the constraints that hold for the set of classical planning instances of a GP problem. The implicit approach is of interest because it allows to compactly specify infinite sets of classical planning instances (e.g. the infinite set of the classical planning instances that belong to the {\em blocksworld} domain)~\cite{khardon1999learning,martin2004learning,illanes2019generalized}. In addition to the set of classical planning instances, extra background knowledge can also be specified in a GP problem with the aim of reducing the space of  hypothetical solutions. For instance, {\em plan traces}/{\em demonstrations} on how to solve some of the  input instances~\cite{yoon2008learning,de2011scaling,silver2020few}, the full state space~\cite{frances2021learning}, the particular subset of state {\em features} that can be used for computing a generalized plan~\cite{Geffner:FSM:AAAI10,blai:gplanning:ijcai18}, {\em negative examples} that specify undesired behavior for the targeted GP solutions~\cite{aguas2020generalized,frances2021learning}, or {\em state invariants} that any state in a given domain must satisfy~\cite{lotinac2016constructing}. 

With respect to {\em solution representation}, different formalisms appeared in the planning literature to represent  solutions that are valid for a set of classical planning instances; sequential plans are used in {\em conformant planning}~\cite{palacios2009compiling},  conditional {\em tree-like} plans are used in {\em contingent planning}~\cite{geffner2013concise}, or policies are used in {\em FOND planning}, as well as in MDP/POMDP planning~\cite{kolobov2012planning}. In all these planning settings, a set of different classical planning instances, with different initial states, can be implicitly represented as a disjunctive formulae over the state variables. Different goals can also be considered by coding them as part of the state representation, e.g. using {\em static} state variables. Since the early days of AI planning, hierarchies, LTL formulae, and policies, are also used to specify {\em sketches} of general solutions~\cite{jimenez2012review}. In the planning literature these solution sketches are often called {\em domain-specific control knowledge}, since they are traditionally used to control the planning process, and they apply to the entire set of classical planning instances that belong to a given   domain~\cite{bacchus2000using,nau2001shop,yoon2008learning,de2011scaling}. Last but not least, algorithmic solutions, represented either as lifted policies, finite automata, or as programs with control-flow constructs for branching and looping, are used to represent GP solutions~\cite{khardon1999learning,Geffner:Gpolicies:AppliedI04,Geffner:FSM:AAAI10,javi-Gplanning-IJCAI16,ramirez2016heuristics,segovia2018computing,Winner03distill:learning,segovia2019computing,jimenez2019review}. 

Regarding to the {\em computation of generalized plans}, there are two main approaches for addressing GP problems. The {\em top-down/offline} approach considers the entire set of classical planning instances in a given GP problem as a single batch, and computes a solution plan that is valid for the full batch at once. A common approach for the offline computation of generalized plans is compiling the GP problem into another form of problem solving, and using an off-the-shelf solver to work out the compiled problem. With this regard, GP problems have been compiled into classical planning problems~\cite{segovia2018computing,segovia2019computing}, conformant planning problems~\cite{Geffner:FSM:AAAI10}, LTL synthesis problems~\cite{bonet2020high}, FOND planning problems~\cite{bonet2019learning,illanes2019generalized} or MAXSAT problems~\cite{frances2021learning}. The compilation approach is appealing because it allows to leverage the latest advances of other well-founded scientific communities, with robust and scalable solvers. In addition, the computational complexity of some of these tasks is theoretically characterized with respect to structural features of the input problems, which may provide insights on the difficulty of the addressed GP problem.  A weak point of the compilation approach is however the size of the compiled problems to be solved; solvers are usually sensitive to the size of the input problems. On the other hand, the {\em bottom-up/online} approach incrementally processes the set of classical planning instances in a GP problem~\cite{Winner03distill:learning,srivastava2011new}. Given a classical planning instance, a solution to that instance is computed and then, the solution is merged with solutions computed for the previous instances.   The online approach is then appealing for handling GP problems that comprise large sets of classical planning instances. The main drawback of online approaches is dealing with the overfitting produced by the individual processing of the different classical planning instances in a GP problem.

As noted by previous work on GP, the aims of GP are connected to {\em program synthesis}~\cite{segovia2019computing,illanes2019generalized,bonet2020high,javi:GP:ICAPS21}. Program synthesis is a task traditionally studied by the {\em computer-aided verification} community~\cite{kurshan2014computer}, and that aims the  computation of programs such that they satisfy a given correctness specification~\cite{solar2008program,gulwani2017program,alur2018search}. Program synthesis follows the {\em functional programming} paradigm. This means that a program is a function composition, where each function in the composition is a mapping of its input parameters into a single output, and where loops are implemented using recursion.  Work on program synthesis is classified according to how the correctness specification of a program is formulated. The {\em programming by example} (PbE) paradigm specifies the desired program behaviour with a finite and non-empty set of ground {\em input/output} examples. This approach is related to the explicit representation of GP problems; a ground {\em input/output} example can be understood as the {\em initial/goal} state pair that represents a classical planning instance, and the instruction set of the functional programming language can be understood as the available actions for transforming an initial state into a goal state. Program synthesis also allows the implicit representation of the input correctness specifications, e.g. using fist-order formulae specified in SMTLIB, the formal language for SAT-Modulo Theories (SMT)~\cite{barrett:smtlib:2010}. The mainstream approach for program synthesis is to specify a formal grammar that allows to incrementally enumerate the  space of possible programs, and to leverage the satisfiability machinery of SMT solvers to validate whether a candidate program is actually a solution. With this regard, work on {\em theorem proving} is also related to program synthesis, specially since SMT solvers allow the representation and satisfaction of first-order logic formulae~\cite{loveland2016automated}. Lastly, another popular trend in program synthesis is {\em Programming by sketches} that addresses program synthesis in the particular setting where a partially specified solution is provided as input~\cite{solar2013program}.  

Besides computational methods for formal verification and logic satisfaction, optimization methods (that are predominant in {\em Machine Learning}~\cite{burkov2019hundred}) have also been applied to the computation of planning solutions that generalize. For instance, off-the-shelf {\em Deep Learning} (DL) tools, have been successfully applied to the computation of generalized policies for classical and probabilistic planning domains~\cite{toyer2018action,bueno2019deep,garg2020symbolic}. Generalized policies are a powerful solution representation formalism whose applicability goes beyond classical planning. Generalized policies can represent planning solutions that can deal with non-deterministic actions~\cite{sanner2009practical}, and whose aim is not to satisfy a given goal condition but to optimize a given utility function~\cite{fern2006approximate}.  The aims of GP are also related to {\em Reinforcement Learning} (RL)~\cite{sutton2018reinforcement}; while the cited DL approaches can be viewed as off-line optimization approaches to GP, the RL paradigm can be viewed as an online optimization approach to GP.  RL methods incrementally compute policies, by iteratively addressing a set of sequential decision-making episodes. In RL learning experience is however not given beforehand (learning experience is collected by the autonomous exploration of the state space), and RL assumes that there is an explicit notion of reward function (which helps to guide  exploration towards the most promising portions of the state-space). Note that DL and DRL approaches learn policies, without requiring a symbolic representation of the state and the action space. This means that it is possible to compute policies ({\em deep policies}) that generalize from raw sensor data (e.g.~sequences of images)~\cite{groshev2018learning,junyent2019deep}. The main disadvantage of computing solutions represented as deep policies is that they are black-box models that lack transparency and explanation capacity, which makes it difficult to interpret the produced solutions. This is a strong requirement in application areas that require humans in the loop, such as health, law, or defense~\cite{pearl2019limitations}.

With regard to the reviewed related work, our {\em GP as heuristic planning} approach is framed as follows:
\begin{itemize}
    \item {\em Numeric state variables}. Previous work on generalized planning mainly followed the object-centered \strips{} representation. Addressing programming tasks with such representation is unpractical since it requires to encode all values in the domain of a state variable as objects. Other approaches, such as Qualitative Numeric Planning (QNP) \cite{srivastava2011qualitative,bonet2019qualitative}, handle large numeric state variables qualitatively with propositions to denote whether a variable is equal to zero.
    In this work we handle GP problems with integer state variables, which allow to naturally address diverse programming tasks as if they were GP problems. 
    \item {\em Explicit problem representation}. In this work, a GP problem comprises the explicit enumeration of a finite set of classical planning instances to be solved. Interestingly our experimental results show that, in several domains, solving a small set of a few randomly generated classical planning instances, is enough to obtain a solution that generalizes to the infinite set of problems that belong to a given domain. 
    \item {\em No background knowledge}. Our approach does not require any additional help such as state invariants, plans/traces/demonstrations, negative examples, or the specification of the subset of features to appear in the generalized plans. 
    \item {\em Generalized plans represented as structured programs}. Structured programming provides a white-box modeling paradigm that is widely popular. In this work we focus on generalized plans represented as structured programs, with control flow constructs for branching and looping the program execution flow. The application of a generalized plan on a particular instance is then a deterministic matching-free process, which makes it easier to define effective evaluation and heuristic functions.  Further, the asymptotic complexity of structured programs can be assessed from their structure, which is also helpful to establish preferences on different possible generalized plans. 
    \item {\em Off-line satisfiability approach}. This work follows an off-line approach to GP that aims to compute, at once, a generalized plan that exactly solves all the classical planning instances that are given as input. Because many heuristic search algorithms are easily extended to online versions, we believe that our GP as heuristic search approach is a stepping stone towards online approaches that can deal with larger sets of classical planning instances. 
    \item {\em Native heuristic search for GP}. By {\em native} heuristic search, we mean that we defined a search space, evaluation/heuristic functions, and a search algorithm, that are specially targeted to GP. Our {\em GP as heuristic search} approach is related to an existing classical planning compilation for GP~\cite{segovia2019computing}. Our approach overcomes however the main drawback of the compilation whose search space grows exponentially with the number and domain size of the state variables; in practice, this limits the applicability of the compilation to planning instances of small size since the performance of off-the-shelf classical planners is sensitive to the size of the input instances. Our experiments support this claim, and show that our {\sc BFGP} algorithm significantly reduces the CPU-time required to compute and validate generalized plans, compared to the classical planning compilation approach to GP~\cite{segovia2019computing}. 
\end{itemize}

\section{Background}
\label{sec:background}
This section introduces the necessary notation to formalize our {\em GP as heuristic search} approach. First, the section formalizes the {\em classical planning} model and the {\em generalized planning} model. Then the section formalizes {\em planning programs}, our formalism for the compact representation of planning solutions, and that applies to both classical planning and generalized planning. Lastly the section formalizes the {\em Random Access Machine} given that, to define a tractable solution space for GP, our {\em GP as heuristic planning} approach borrows several mechanisms from this abstract computation machine.

\subsection{Classical Planning}
Our formalization of the classical planning model is  similar to the abstract planning framework called {\em Finite Functional Planning}, that was introduced for the theoretical analysis of different ground languages for classical planning~\cite{backstrom2011all}. Let $X$ be a set of {\em state variables}, where each variable $x\in X$ has a domain $D_x$. A {\em state} $s$ is a total assignment of values to the set of state variables, i.e.~$s=\tup{x_0=v_0, \ldots, x_N=v_N}$, such that $\forall_{0\leq i\leq N} v_i\in D_{x_i}$. For a subset of the state variables $X'\subseteq X$, let $D[X']=\times_{x\in X'} D_x$ denote its joint domain. The state space is denoted as $S=D[X]$. Given a state $s\in S$, and a subset of variables $X'\subseteq X$, let $s_{|X'}=\tup{x_i=v_i}_{x_i\in X'}$ be the {\em projection} of $s$ onto $X'$ i.e.~the partial state that is defined by the values assigned by $s$ to the subset of state variables in $X'$. The {\em projection} of $s$ onto $X'$ defines the subset $\{s \mid s \in S, s_{|X'}\subseteq s\}$ of the states that are consistent with the corresponding partial state. Last, let us define a {\em state-constraint} $C$ as a Boolean function $C:S\rightarrow\{0,1\}$ over the state variables, that implicitly defines the subset of states $S_C\subseteq S$ that are consistent with that constraint.

Let $A$ be a {\em set of deterministic actions} such that each action $a\in A$ is characterized by two functions; an {\em applicability function} $\rho_a: S \rightarrow \{0,1\}$ and a {\em successor function} $\theta_a: S\rightarrow S$. An action $a\in A$ is applicable in a given state $s\in S$ iff $\rho_a(s)$ equals $1$. The execution of an applicable action $a\in A$, in a state $s\in S$ results in the {\em successor} state $s'=s\oplus a$. Note that our definition of deterministic actions generalizes actions with conditional effects~\cite{nebel2000cond}, common in GP since their state-dependent outcomes allows the adaptation of generalized plans to different classical planning instances.  

A {\em classical planning instance} is a tuple $P=\tup{X,A,I,G}$, where $X$ is a set of state variables, $A$ is a set of actions, $I\in S$ is an initial state, and $G$ is a constrain on the value of the state variables that induces the subset of {\em goal states} $S_G = \{s \mid s \vDash G, s \in S\}$. Given $P$, a {\em plan} is an action sequence $\pi=\tup{a_1, \ldots, a_m}$ whose execution induces a {\em trajectory} $\tau=\tup{s_0, a_1, s_1, \ldots, a_m, s_m}$ such that, for each $1\leq i\leq m$, $a_i$ is applicable in $s_{i-1}$ and results in the successor $s_i=s_{i-1}\oplus a_i$. A plan $\pi$ {\em solves} $P$ if and only if the execution of $\pi$ in $s_0=I$ finishes in a goal state, i.e.~$s_m\in S_G$. We say $\pi$ is {\em optimal} if $|\pi|=m$ is minimal among the set of all the plans that solve $P$.

Planning languages, such as PDDL~\cite{haslum2019introduction}, can compactly represent the infinite set of classical planning instances of a given domain  using a finite set of functions and a finite set of action schemes. Given a finite set of objects $\Omega$, and a finite set of functions $\Phi$ defined over that set of objects, we assume that each state variable $x \in X$ stands for a function interpretation $x\equiv \phi(\overrightarrow{o})$, where $\phi\in\Phi$ is a function with arity $ar(\phi)$, and $\overrightarrow{o} \in \Omega^{ar(\phi)}$ is a vector of objects comprised in the Cartesian product space of $\Omega^{ar(\phi)}$. For keeping compact the number of state variables, objects and function signatures can by typed so the number of possible function interpretations is constrained. Functions in $\Phi$ can be Boolean e.g. to represent PDDL predicates, or numeric e.g. to represent PDDL numeric fluents. Likewise, given a set of action schemes $\Xi$, we assume that each action $a\in A$ is built from an action schema $\xi\in\Xi$ by substituting each variable in the action scheme with an object from $\Omega$. An action scheme $\xi\in\Xi$ is a tuple $\xi=\tup{name(\xi), par(\xi), pre(\xi), eff(\xi)}$; where $name(\xi)$ is the identifier of the action schema, $par(\xi)$ is its list of variables (again these variables can be typed so they can only be substituted by objects of the same type), $pre(\xi)$ is a FOL Boolean formula  defined over $par(\xi)$ that compactly represents the subset of states where the corresponding ground actions are applicable, and {\em eff}$(\xi)$ is list of FOL constraints that compactly represents the updates of the state variables caused by the application of the corresponding ground actions.

\subsection{Generalized Planning}
{\em Generalized planning} is an umbrella term that refers to more general notions of planning~\cite{jimenez2019review}. This work builds on top of the inductive formalism for GP, where a GP problem is defined as a finite set of classical planning instances that share a common structure~\cite{hu2011generalized,bonet2020high}. In this work we assume that the finite set of classical planning instances in a GP problem belong to the same domain. These instances share then a common structure since they are built from the same sets of functions $\Phi$ and action schemes $\Xi$. 
\begin{definition}[GP problem]
	\label{def:gp-problem}
	A {\em GP problem} $\mathcal{P}=\{P_1,\ldots,P_T\}$ is a finite and non-empty set of $T$ classical planning instances  $P_1=\tup{X_1,A_1,I_1,G_1}, \ldots, P_T=\tup{X_T,A_T,I_T,G_T}$ such that at each instance $P_t\in\mathcal{P}$,  {\small $1\leq t \leq T$}, may differ in the set of state variables, actions, initial state, and goals, but the corresponding set of state variables $X_t$ is induced from the common set of functions $\Phi$, and the set of actions $A_t$ from the common set of action schemes $\Xi$.
\end{definition}

There are diverse representations for GP solutions that range from {\it generalized polices}~\cite{khardon1999learning,Geffner:Gpolicies:AppliedI04}, to {\em finite state controllers}~\cite{Geffner:FSM:AAAI10,javi-Gplanning-IJCAI16}, formal grammars~\cite{ramirez2016heuristics},  hierarchies~\cite{nau:shop2:JAIR03,segovia2018computing}, or programs~\cite{Winner03distill:learning,segovia2019computing}. Each representation has its own expressiveness capacity, as well as its own validation complexity and computation complexity. In spite of this representation diversity, we can define a common condition under which a generalized plan is considered a solution to a GP problem. First, let us define $exec(\Pi,P)=\tup{a_1, \ldots, a_m}$ as the sequential plan that is produced by the execution of a {\em generalized plan} $\Pi$ on a classical planning instance $P$.

\begin{definition}[GP solution]
	\label{def:gp-solution}
	A {\em generalized plan} $\Pi$ solves a GP problem $\mathcal{P}=\{P_1,\ldots,P_T\}$ iff, for every classical planning instance $P_t\in \mathcal{P}$, $ 1\leq t\leq T$, it holds that the sequential plan $exec(\Pi,P_t)$ solves $P_t$.
\end{definition}

A GP solution $\Pi$ for a given {\em GP problem} $\mathcal{P}$ is optimal iff, for every $P_t\in\mathcal{P}$, the sequential plan $exec(\Pi,P_t)$ induced by $\Pi$ for solving the classical planing instance $P_t$ is an optimal plan for that instance.

\vspace{.5cm}

{\bf Example}. Figure~\ref{fig:cp-example} shows the initial state and goal of two classical planning instances, $P_1=\tup{X,A,I_1,G_1}$ and $P_2=\tup{X,A,I_2,G_2}$, for sorting two six-element lists. In this example the two instances share the same set of state variables $X=\{x_i\equiv vector(o_i) | 0 \leq i \leq 5 \}$ that is built with the one-arity function $\Phi=\{vector\}$ and the set of objects $\Omega_1 = \Omega_2 = \{o_0,\ldots,o_5\}$, and where $\forall_{x\in X} D_x = \mathbf{N}_0$. The two classical planning instances also share the set of deterministic actions $A$, with $\frac{6\times 5}{2}$ actions $swap(o_i,o_j)$, that swap the content of two list positions $i<j$, and that are induced from the single action scheme $\Xi=\{swap(x,y)\}$. An example solution plan for $P_1$ is  $\pi_1=\tup{swap(o_0,o_5), swap(o_1,o_2), swap(o_1,o_3)}$ while $\pi_2=\tup{swap(o_0,o_2), swap(o_3,o_5)}$ is an example of a sequential plan that solves $P_2$. Note that the set $\mathcal{P}=\{P_1,P_2\}$ is a GP problem since they are two classical planning instances that are built using the same set of functions $\Phi$ and action schemes $\Xi$. Figure~\ref{fig:sortnet} shows an example of a generalized plan that solves the GP problem  $\mathcal{P}=\{P_1,P_2\}$, and that is represented as a {\em sorting network}.  The sorting network is illustrated using two different types of items (namely the {\em wires} and the {\em comparators}). For each state variable, there is a wire that carries the value of that variable from left to right in the network. On the other hand, comparators connect two different wires, corresponding to a pair of variables $(x_i,x_j)$, such that $i<j$. When a pair of values traveling through a pair of wires $(i,j)$, encounters a comparator, then the comparator applies the action $swap(o_i,o_j)$ iff $vector(o_i)\geq vector(o_j)$, which in turn is $x_i \geq x_j$. The sorting network of Figure~\ref{fig:sortnet} can actually solve any instance for sorting the content of any six-element list, no matter its initial content. This  solution is however not valid for sorting lists with different lengths. In this paper we will show how to represent and compute planning solutions that leverage {\em indirect memory addressing} to generalize no matter the number of objects, and corresponding state variables.

\begin{figure}
	\centering
	\begin{tikzpicture}
	\draw[draw=black,step=0.5cm] (0.0,0.0) grid (3.0,0.5);
	\draw[draw=black,step=0.5cm] (4.0,0.0) grid (7.0,0.5);
	\draw[draw=black] (4.0,0.0) -- (4.0,0.5);
	\draw[draw=black,step=0.5cm] (0.0,1.0) grid (3.0,1.5);
	\draw[draw=black,step=0.5cm] (4.0,1.0) grid (7.0,1.5);
	\draw[draw=black] (4.0,1.0) -- (4.0,1.5);
	\draw[draw=black] (0.0,1.0) -- (3.0,1.0);
	\draw[draw=black] (4.0,1.0) -- (7.0,1.0);
	\draw[->,black] (3.1,0.25) -- (3.9,0.25);
	\draw[->,black] (3.1,1.25) -- (3.9,1.25);
	\node at (1.5,2.0) {\textbf{Initial State}};
	\node at (5.5,2.0) {\textbf{Goal State}};
	\node at (-0.5,0.25) {$P_2$};    
	\node at (0.25,0.25) {3};
	\node at (0.75,0.25) {2};
	\node at (1.25,0.25) {1};
	\node at (1.75,0.25) {6};
	\node at (2.25,0.25) {5};
	\node at (2.75,0.25) {4};
	\node at (4.25,0.25) {1};
	\node at (4.75,0.25) {2};
	\node at (5.25,0.25) {3};
	\node at (5.75,0.25) {4};
	\node at (6.25,0.25) {5};
	\node at (6.75,0.25) {6};
	\node at (-0.5,1.25) {$P_1$};        
	\node at (0.25,1.25) {6};
	\node at (0.75,1.25) {3};
	\node at (1.25,1.25) {4};
	\node at (1.75,1.25) {2};
	\node at (2.25,1.25) {5};
	\node at (2.75,1.25) {1};
	\node at (4.25,1.25) {1};
	\node at (4.75,1.25) {2};
	\node at (5.25,1.25) {3};
	\node at (5.75,1.25) {4};
	\node at (6.25,1.25) {5};
	\node at (6.75,1.25) {6};
	\end{tikzpicture}
	\caption{\small Example of two classical planning instances for sorting the content of two six-element lists by swapping the list elements.}
	\label{fig:cp-example}
	\begin{tikzpicture}
		\node at (0,2.8) {\small $x_5$};
	    \node at (0,2.3) {\small $x_4$};
		\node at (0,1.8) {\small $x_3$};		
	    \node at (0,1.3) {\small $x_2$};
		\node at (0,0.8) {\small $x_1$};	
	    \node at (0,0.3) {\small $x_0$};
		\node at (0,0) {\small };
	\end{tikzpicture}
\includegraphics[scale=.5]{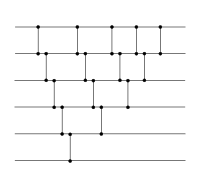}
	\caption{\small Example of a generalized plan, represented as a {\em sorting network} that solves any classical planning instances for sorting the content of a six-element list, no matter its initial content.}
	\label{fig:sortnet}
\end{figure}

\newpage
\subsection{Planning programs}
In this work we represent planning solutions as {\em planning programs}~\cite{segovia2019computing}. Unlike sequential plans, {\em planning programs} include a control flow construct which allows the compact representation of solutions to classical planning problems and to GP problems.  Formally a {\em planning program} is a sequence of $n$ instructions  $\Pi=\tup{w_0,\ldots,w_{n-1}}$, where each instruction $w_i\in \Pi$ is associated with a {\em program line} {\small $0\leq i< n$}, and it is either: 
\begin{itemize}
	\item A {\em planning action} $w_i\in A$.
	\item A {\em goto instruction} $w_i=\mathsf{go}(i',!y)$, where $i'$ is a program line $0\leq i'<i$ or $i+1<i'< n$, and $y$ is a proposition.
	\item A {\em termination instruction} $w_i=\mathsf{end}$. The last instruction of a planning program is always a termination instruction, i.e. $w_{n-1}=\mathsf{end}$. 
\end{itemize}

The execution model for a planning program  is a {\em program state} $(s,i)$, i.e.~a pair of a planning state $s\in S$ and program counter $0\leq i<n$. Given a program state $(s,i)$, the execution of a programmed instruction $w_i$ is defined as:
\begin{itemize}
	\item If $w_i\in A$, the new program state is $(s',i+1)$, where $s'=s\oplus w_i$ is the {\em successor} when applying $w_i$ in $s$.
	\item If $w_i=\mathsf{go}(i',!y)$, the new program state is $(s,i+1)$ if $y$ holds in $s$, and $(s,i')$ otherwise\footnote{We adopt the convention of jumping to line $i'$ whenever $y$ is {\em false}, following the JMP instructions in the {\em Random-Access Machine} that jump when a register equals zero.}. Proposition $y$ can be the result of an arbitrary expression on state variables, e.g.~a state {\em feature}~\cite{lotinac2016automatic}. 
	\item If $w_i=\mathsf{end}$, program execution terminates. 
\end{itemize}

To execute a planning program $\Pi$ on a classical planning instance $P=\tup{X,A,I,G}$, the initial program state is set to $(I,0)$, i.e.~the initial state of $P$ and the first program line of $\Pi$. A program $\Pi$ {\em solves} $P$ iff the execution terminates in a program state $(s,i)$ that satisfies the goal condition, i.e.~$w_i=\mathsf{end}$ and $s\in S_G$. Otherwise the execution of the program fails. If a planning program {\em fails} to solve the planning instance, the only possible sources of failure are: 
\begin{enumerate}
    \item {\em Inapplicable program}, i.e.~executing action $w_i\in A$ fails in program state $(s,i)$ since $w_i$ is not applicable in $s$.
    \item {\em Incorrect program}, i.e.~execution terminates in a program state $(s,i)$ that does not satisfy the goal condition, i.e.~($w_i=\mathsf{end})\wedge (s\notin S_G)$.
\item {\em Infinite program}, i.e.~execution enters into an infinite loop that never reaches an $\mathsf{end}$ instruction.
\end{enumerate}

In this work we model instructions $w_i\in A$ as if they were always applicable but that their effects only update the current state iff the preconditions of the action hold in the current planning state. Formally, when executing $w_i$ in $(s,i)$, the new program state is $(s',i+1)$ iff $w_i$ is applicable, otherwise it is $(s,i+1)$. Therefore, in this work the execution of a program on a classical planning instance will never return an {\em inapplicable program}, and only {\em incorrect} or {\em infinite program} are possible sources of failure. This particular action modeling is common in {\em Reinforcement Learning}~\cite{sutton2018reinforcement}, and in {\em conformant planning}~\cite{palacios2009compiling}, because it allows to deliver compact solutions that apply to sets of different sequential decision-making problems (typically with different initial states).

\subsection{The Random-Access Machine}
The {\em Random-Access Machine} (RAM) is an abstract computation machine, in the class of {\em register machines}, that is polynomially equivalent to a Turing machine~\cite{boolos2002computability}. The RAM machine enhances a multiple-register {\em counter machine}~\cite{minsky1961recursive} with indirect memory addressing. The indirect memory addressing of RAM machines is useful for defining RAM programs that access an unbounded number of registers, no matter how many there are. With this regard, a {\em register} in a RAM machine is then a memory location with both a {\em content} (a single natural number), and an {\em address} (a unique identifier that works as a natural number). Let be $r$ the address of a RAM register, and $[r]$ the content of that register. 

Diverse {\em base instructions sets}, that are Turing complete, can be defined on the RAM registers. Our {\em GP as heuristic search} approach builds on the instructions of the {\em Base set 1} and the {\em Base set 3}, that are briefly defined as follows:
\begin{itemize}
	\item $inc(r)$ increments the content of the register $r$ by $1$, i.e. $([r]+1)\rightarrow [r]$. Likewise, $dec(r)$ decrements the content of the register $r$, i.e. $([r]-1) \rightarrow [r]$.
	\item $copy(r_1,r_2)$ copies the content of $r_1$ in $r_2$, i.e. $[r_1]\rightarrow [r_2]$.
	\item $jmp(z,r)$ jumps to instruction $z$ if register $r$ is zero, else the RAM execution continues to the next instruction. 
	\item $halt$ terminates the RAM execution.
\end{itemize}

Auxiliary {\em dedicated} registers allow to reduce the size of the RAM instruction set. For instance, the number of registers  called out by the RAM instructions can be reduced  by using an {\em accumulator} dedicated register. WLOG in this work we extend {\em Base sets 1} and {\em 3} with two FLAGS registers, the $zero$ and the $carry$ registers, that are dedicated to store the outcome of {\em three-way comparisons}~\cite{browning2020working} between two registers, as well as to condition jump instructions. On the other hand, the RAM instruction set can be extended with extra instructions, that can actually be built as blocks of the base RAM instructions, but that allow the definition of more compact RAM programs. For instance, an extra instruction for {\em clearing} (setting to zero) the value of a register.

\section{Planning with a RAM}
\label{sec:pointers}
Inspired by the computational model of the RAM machine, this section extends the classical planning model with a set of {\em pointers}, defined over the set of objects used to build the state variables and actions of a classical planning instance, and with the set of {\em primitive operations} for  manipulating these pointers. Extending the classical planning model with a set of pointers, and their primitives, allows the agnostic definition of a set of state features, and a set of actions, that are shared for the different classical planning instances in a given domain and that can be leveraged for the computation of generalized plans. For instance, the {\em selection sort} algorithm is able to solve any sorting instance, no matter the content or the length of the input list, because it is provided with mechanisms for indexing the list positions and for operating over those indexes. 

First, the section shows how to compactly represent a transition system using pointers. Then the section formalizes our extension of the classical planning model with a set of {\em pointers}, and their corresponding primitive operations. The last part of the section shows that our pointer-based formalism applies also to the object-centered representations traditionally used in classical planning.

\subsection{Representing transition systems with pointers}
Formally, a {\em transition system} is a pair $(S,\rightarrow)$, where $S$ is a set of states, and $\rightarrow$ denotes a relation of state transitions $S\times S$. Unlike finite-state automata, the set of states and the set of state transitions of a transition system are not necessarily finite, and no initial/goal states are specified. {\em State constraints} allow the compact representation of (possibly infinite) sets of states. For instance given the set of  state variables $X=\{x_0, x_1, x_2, x_3 ,x_4, x_5\}$ of the example in Figures~\ref{fig:cp-example} and~\ref{fig:sortnet}, the state constraint $x_0\leq x_1\leq x_2\leq x_3\leq x_4\leq x_5$ defines the subset of states where the content of these variables is sorted in increasing order and it applies no matter the domain of the state variables (that can actually be infinite). 

In {\em object-centered} transition systems, states are factored and each state variable is addressed by a function $\phi \in \Phi$ fed with a list of objects $\overrightarrow{o}$, i.e. $\phi(\overrightarrow{o})$ s.t. $\overrightarrow{o}\in \Omega^{ar(\phi)}$; where $\Omega$ is the set of objects and $ar(\phi)$ denotes the arity of  $\phi$. For instance in the previous example, given the one-arity function $vector$ and the six-objects set $\Omega = \{o_0, o_1, o_2, o_3, o_4, o_5 \}$, each state variable $x_i \in X$ may be defined as $x_i \equiv vector(o_i)$. 
\begin{definition}[Pointer]
Given a set of objects $\Omega$, we define a pointer as a finite-domain variable $z$ whose domain is $D_z=[0,\ldots,|\Omega|-1]$, where $|\Omega|$ denotes the number of objects.
\end{definition}

{\em Pointers} are then variables for indexing the objects of a transition system that, in combination with function symbols, are useful to define state constraints that produce not only compact, but {\em general} representations of a possibly infinite set of states. By general we mean that a constraint represents a set of states that share some common structure, no matter the actual number of objects, and corresponding state variables. For instance, Figure~\ref{fig:gconstraint} shows the Boolean function {\tt\small constraint\_sorted}, that implements a {\em global constraint} for checking whether the content of the vector of state variables is sorted in increasing order. The {\tt\small constraint\_sorted} function is procedurally defined, leveraging a single pointer $i$, and it applies to any number of objects, and to any domain size of the corresponding state variables. 

\begin{figure}
\centering
\begin{tabular}{c}
\begin{small}
\begin{lstlisting}[basicstyle=\ttfamily,columns=fullflexible,keepspaces=true]
Bool constraint_sorted () {	
    For (Pointer i:=1; i<|$\Omega$|; i++) {
        If ( vector(i-1) > vector(i) ) 
            Return False;
    }
    Return True;
}
\end{lstlisting}
\end{small}
\end{tabular}
	\caption{\small Boolean function {\tt\small constraint\_sorted} that implements a   constraint for validating whether the vector of state variables is sorted in increasing order. The constraint is implemented leveraging the single pointer $i$ over the objects in $\Omega$; $vector(i)$ is interpreted as $vector(o_i)\equiv x_i \in X$.}
	\label{fig:gconstraint}
\end{figure}

Besides the compact and general definition of (possibly infinite) sets of states, pointers over objects also enable the compact and general definition of (possibly infinite) sets of state transitions via {\em action schemes}. 
\begin{definition}[Action schema with pointers] Given a set of $X$ state variables, an {\em action schema} with pointers is a tuple $\langle${\it name, params, pre, eff}$\rangle$ where:
\begin{itemize}
    \item {\em name} is the symbol that uniquely identifies the action schema.
    \item {\em params} is a finite set of pointers $Z$ defined over the set $\Omega$ of objects.
    \item {\em pre} is a state constraint where state variables are indirectly addressed via the function symbols and the pointers in {\em params}, i.e. $x\equiv \phi(\overrightarrow{z})$ such that $\phi\in \Phi$ and $\overrightarrow{z}\in Z^{ar(\phi)}$. The {\em pre} state constraint implicitly represents the subset of states where the action schema is applicable.
    \item {\em eff} is a partial assignment of the state variables where state variables are indirectly addressed via the function symbols and the pointers in {\em params}. The {\em eff} partial assignment implicitly represents the successor state that results from the execution of the action schema at a given state. 
\end{itemize}
\end{definition}
To illustrate our pointer-based definition of an action schema, Figure~\ref{fig:ascheme} shows a procedural representation of the preconditions ({\em pre}) and the  effects ({\em eff}) of the {\tt\small swap} action schema. When applicable, the {\tt\small swap} action schema exchanges the value of the state variables induced by its two  parameters (the pointers $i$ and $j$). The {\tt\small swap} action schema is succinct, because it compactly defines an infinite set of different state transitions that share a common structure. The {\tt\small swap} action schema is also general, because it applies to any sorting instance, no matter the number of state variables (i.e. the length of the vector of state variables) or the domain size of these variables. What is more, the execution of the {\tt\small schema\_swap\_pre} and {\tt\small schema\_swap\_eff} procedures is a deterministic matching-free process since the input pointers do always index an object in $\Omega$.

\begin{figure}
\centering
\begin{tabular}{c}
\begin{small}
\begin{lstlisting}[basicstyle=\ttfamily,columns=fullflexible,keepspaces=true]
Bool schema_swap_pre (Pointer i, Pointer j) {
     Return (i>=0 and j>=0 and i<|$\Omega$| and j<|$\Omega$| and i<j);
}

void schema_swap_eff (Pointer i, Pointer j) {
     Variable aux;
     aux:= vector(i);
     vector(i):= vector(j); 
     vector(j):= aux;
}
\end{lstlisting}
\end{small}
\end{tabular}
	\caption{\small Pointer-based representation of the {\em preconditions} and the {\em effects} of the {\tt\small swap} action schema. When applicable, the {\tt\small swap} action schema exchanges the value of the state variables indexed by its two {\em parameters}, the pointers $i$ and $j$.}
	\label{fig:ascheme}
\end{figure}

\subsection{Classical planning with pointers}
Here we extend the {\em classical planning} model (introduced in Section~\ref{sec:background}) with a set of pointers and their corresponding primitive operations. We formalize the set of primitive operations over pointers leveraging the notion of {\em Random-Access Machine}. In more detail, we extend the classical planning model with a RAM machine of $|Z|+2$ registers; $|Z|$ {\em pointers} that reference the planning objects, plus two FLAGS registers (the {\em zero} and the {\em carry}). The two FLAGS registers store the outcome of {\em three-way comparisons}, between two different pointers or between two state variables that are indirectly addressed via function symbols and pointers. 

Given a classical planning instance $P=\tup{X,A,I,G}$, such that the state variables and actions are generated with the set of functions $\Phi$ and action schemes $\Xi$ of a given domain, and the set of objects $\Omega$, then the {\em extended classical planning instance} with a RAM machine of $|Z|+2$ registers is defined as $P_Z'=\tup{X_Z',A_Z',I_Z',G}$, where: 
\begin{itemize}
	\item The new set of {\bf state variables}
	$X_Z'$ comprises:
	\begin{itemize}
		\item The state variables $X$ of the original planning instance, such that each state variable $x_i\in X$ is $x_i\equiv \phi(\overrightarrow{o})$ with $\phi\in\Phi$ and $\overrightarrow{o}\in\Omega^{ar(\phi)}$, as defined above.
		\item Two {\em Boolean variables} $Y=\{y_z,y_c\}$, that play the role of the {\em zero} and  {\em carry} FLAGS registers, respectively.
		\item The {\em pointers} $Z$, a set of extra state variables s.t. each $z\in Z$ has finite domain {\small $D_z=[0,\ldots,|\Omega|-1]$}.
		\item A set of {\em derived state variables} $X_Z = \{ \; \phi(\overrightarrow{z}) \; | \; \phi\in\Phi, \overrightarrow{z} \in Z^{ar(\phi)} \; \}$ whose value is given by the interpretations of the functions of the domain with the corresponding pointers.		
	\end{itemize}
Given a fixed number of pointers, $Y$, $Z$, and $X_Z$, are a subset of state variables that is shared by all the instances that belong to the same domain, no matter the number of objects of each instance.

	\item The new {\bf set of actions} $A_Z'$ will represent the set of actions that is shared for the different classical planning instances in a given domain, and it includes: 
		\begin{itemize}
		\item The {\bf planning actions} $A'$ that result from reformulating each action scheme $\xi\in \Xi$ into its corresponding pointer-based version. The reformulation is a two-step procedure that requires that $Z$ contains, at least, as many pointers as the largest arity of an scheme in $\Xi$: (i), each parameter in $par(\xi)$ is replaced with a pointer in $Z$ and (ii), preconditions and effects are rewritten to refer to these pointers. 
		\item The {\bf RAM actions} that implement the following sets of RAM instructions $\{{\tt\small inc}(z_1)$, ${\tt\small dec}(z_1)$, ${\tt\small cmp}(z_1,z_2)$, ${\tt\small set}(z_1,z_2)$ $| \; z_1,z_2 \in Z\}$ over the pointers in $Z$, and $\{{\tt\small test}(\phi(\overrightarrow{z_1})), {\tt\small cmp}(\phi(\overrightarrow{z_1}),\phi(\overrightarrow{z_2}))\; |\; \overrightarrow{z_1},\overrightarrow{z_2} \in Z^{ar(\phi)} \}$ over the lists of pointers in $Z^{ar(\phi)}$ for each function symbol $\phi\in\Phi$. Respectively, these RAM instructions {\em increment}/{\em decrement} a pointer by one, {\em compare} two pointers, {\em set} the value of a pointer $z_2$ to another pointer $z_1$, {\em test} whether  $\phi(\overrightarrow{z_1})$ is greater than zero, and {\em compare}\footnote{${\tt\small cmp}(\phi(\overrightarrow{z_1}),\phi(\overrightarrow{z_2}))$ instructions are only defined for numeric functions.} the value of $\phi(\overrightarrow{z_1})$ and $\phi(\overrightarrow{z_2})$.
		Each RAM action also updates the $Y=\{y_z,y_c\}$ {\sc flags}, according to the result of the corresponding RAM instruction (which is denoted here by $res$):
\begin{small}
\begin{align*}
 inc(z_1) &\implies res := z_1 + 1,\\
 dec(z_1) &\implies res := z_1 - 1,\\
 cmp(z_1,z_2) &\implies res := z_1 - z_2,\\
 set(z_1,z_2) &\implies res := z_2,\\
 test(\phi(\overrightarrow{z_1})) &\implies res := \phi(\overrightarrow{z_1}),\\ 
 cmp(\phi(\overrightarrow{z_1}),\phi(\overrightarrow{z_2})) &\implies res := \phi(\overrightarrow{z_1}) - \phi(\overrightarrow{z_2}),\\ 
 y_z &:= ( res == 0 ),\\
 y_c &:= ( res > 0 ).
\end{align*}
\end{small}
\end{itemize}	
	\item The new {\bf initial state} $I_Z'$ is the initial state of the original planning instance, but extended with all {\em pointers} set to zero and the two {\em FLAGS} set to {\tt\small False}. The {\bf goals} are the same as those of the original planning instance.
\end{itemize}

{\bf Example}. Here we extend the classical planning instance $P_1=\tup{X,A,I_1,G_1}$ (illustrated in Figure~\ref{fig:cp-example}) with a RAM machine of $|Z|+2$ registers, where $Z=\{i,j\}$ is a set of two pointers. According to this extension, our pointer-based representation of the sequential plan $\pi_1=\tup{swap(o_0,o_5), swap(o_1,o_2), swap(o_1,o_3)}$ is the following sequence of thirteen actions $\pi_1'=\tup{inc(j)^5, swap(i,j), inc(i), dec(j)^3, swap(i,j), inc(j), swap(i,j)}$; where superscripts refer to the number of times that an instruction is sequentially repeated, and where $swap(i,j)$ refers to the pointer-based action schema defined in Figure~\ref{fig:ascheme}. Likewise, our pointer-based version of the sequential plan $\pi_2=\tup{swap(o_0,o_2), swap(o_3,o_5)}$, that solves the classical planning problem $P_2$ illustrated in Figure~\ref{fig:cp-example}, is the ten-action sequence $\pi_2'=\tup{inc(j)^2, swap(i,j), inc(i)^3, inc(j)^3, swap(i,j)}$. Note that any sequential plan for solving a classical planning instance from the vector sorting domain,  no matter the number of state variables and no matter the domain size of these state variables, can be built using exclusively actions from the following set: $\{inc(i), inc(j), dec(i), dec(j), swap(i,j)\}$.

\subsubsection{Theoretical properties}

\begin{theorem}
Given a classical planning instance $P$, its extension $P'_Z$, with a RAM machine of $|Z|$ pointers, preserves the solution space of $P$.
\end{theorem}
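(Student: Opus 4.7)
The natural reading of \emph{preserves the solution space} is that there is a correspondence between plans solving $P$ and plans solving $P'_Z$, so that $P$ is solvable iff $P'_Z$ is solvable, and (optimal cost aside) every solution of one lifts/projects to a solution of the other. My plan is to establish two directions: a \emph{lifting} $\Phi$ from plans of $P$ to plans of $P'_Z$, and a \emph{projection} $\Psi$ from plans of $P'_Z$ to plans of $P$. A single structural observation drives everything: by construction, every RAM action in $A'_Z\setminus A'$ modifies only the FLAGS $Y$, the pointers $Z$, and the derived variables $X_Z$ (which are functionally determined by the pointers and by $X$), while the pointer-based planning actions in $A'$ have exactly the same effect on $X$ as their schema-instantiated counterparts in $A$, whenever the pointers $\overrightarrow{z}$ currently index the object tuple $\overrightarrow{o}$ used to ground the original action. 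This gives the key invariant: for every reachable state $s'$ of $P'_Z$, the projection $s'_{|X}$ evolves exactly as the trajectory in $P$ induced by the subsequence of pointer-based planning actions of the trace, evaluated at the objects the pointers point to. Since the goal $G$ is a constraint over $X$ only, goal satisfaction is decided on this projection.

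For the lifting $\Phi$, given a plan $\pi=\langle a_1,\ldots,a_m\rangle$ solving $P$, I process each ground action $a_k=\xi(o_{k,1},\ldots,o_{k,r})$ in turn. Because $|Z|$ is assumed to be at least the maximum arity of any schema in $\Xi$, I can pick pointers $z_1,\ldots,z_r\in Z$ and prepend a short block of RAM $\mathsf{inc}/\mathsf{dec}/\mathsf{set}$ actions that drives each $z_j$ from its current value to the index of $o_{k,j}$ (e.g.\ by decrementing to zero and then incrementing to the target, or by $\mathsf{set}$ when a pointer already indexes an appropriate object). Appending the pointer-based action from $A'$ corresponding to $\xi$ then produces a state whose $X$-projection equals $s_{k-1}\oplus a_k$, by the invariant. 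Concatenating all these blocks from the initial state $I'_Z$ (which agrees with $I$ on $X$) yields a trace whose final $X$-projection equals $s_m\in S_G$, so the lifted plan solves $P'_Z$.

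For the projection $\Psi$, given a plan $\pi'$ solving $P'_Z$ with trajectory $\tau'$, I delete all RAM actions and replace each remaining pointer-based action in $A'$ with the ground action in $A$ obtained by reading off the current values of the referenced pointers. The invariant shows that the resulting sequence of ground actions, executed from $I$, produces exactly the $X$-projection of $\tau'$; since the final state of $\tau'$ satisfies $G$ and $G$ is defined over $X$, the projected plan solves $P$. Applicability is preserved because the precondition formulas of the reformulated schemas are literally the originals with parameters renamed to pointers, and they are evaluated on the same state-variable values.

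The only subtle point, and where I expect most of the care, is handling applicability and the action-execution convention declared earlier in the paper: instructions $w_i\in A$ are modeled as always executable, with their effects firing only when preconditions hold. I need to check that this convention is harmless in both directions (in the lifting, the chosen pointers make preconditions hold by construction; in the projection, any pointer-based action whose preconditions fail leaves $X$ unchanged and is projected to a ground action that would either be skipped or omitted without affecting the $X$-trajectory). A clean way to package this is a single lemma stating that, along any trace of $P'_Z$, the subsequence of pointer-based planning actions whose preconditions actually fire, interpreted via the current pointer assignments, is a valid trace in $P$ on the $X$-projection; the theorem then follows immediately.
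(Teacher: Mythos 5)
Your proposal is correct and follows essentially the same two-direction simulation argument as the paper: lift each ground action by first driving pointers to the right objects with RAM actions and then applying the pointer-based schema, and project a plan for $P'_Z$ back by keeping only the $A'$ actions grounded at the pointer values in effect when they execute. Your explicit invariant about the $X$-projection and your attention to the "always-executable" action convention are just more careful spellings-out of steps the paper's proof treats implicitly.
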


\begin{proof}
$\Rightarrow$: Let $\pi=\tup{a_1,\ldots,a_m}$ be a plan that solves $P$, an equivalent plan $\pi'$ that solves $P'_Z$ is built as follows; for each action $a_i\in\pi$, $A'$ contains a pointer-based action schema $a_i'$ that replaces each parameter in $par(a_i)$ with a pointer $z\in Z$. For each such pointer $z$, the plan repeatedly applies RAM actions {\tt\small inc($z$)} or {\tt\small dec($z$)} until they reference the associated vector of objects $\overrightarrow{o}$, and then it applies $a_i'$. The resulting plan $\pi'$ has exactly the same effect as $\pi$ on the original planning state variables in $X$, and since the goal condition of $P'_Z$ is the same as that of $P$, it follows that $\pi'$ solves $P'_Z$.

$\Leftarrow$: Let $\pi'=\tup{a_1',\ldots,a_m'}$ be a plan that solves $P'_Z$. Identify each action in $A'$ among those of $\pi'$, and execute $\pi'$ to identify the assignment of objects to pointers when applying each action in $A'$. Construct a plan $\pi$ corresponding to the subsequence of actions in $A'$ from $\pi'$, replacing each action schema $a_i'\in A'$ by an original action $a_i\in A$ and choosing as parameters of $a_i$ the objects referenced by the pointers of $a_i'$ at the moment of execution. Hence $a_i$ has the same effect as $a_i'$ on the state variables in $X$, implying that $\pi$ has the same effect as $\pi'$ on $X$. Since the goal condition of $P$ is the same as that of $P'_Z$, it follows that $\pi$ solves $P$.
\end{proof}

Our extension of a classical planning problem with a RAM machine of $|Z|+ 2$ registers preserves the solution space of the original problem. Sequential plans in the extended planning model are however longer when pointers must be incremented (or decremented) multiple times to access the corresponding objects, before the corresponding  action schema is executed. For instance, our pointer-based version of plan $\pi_1$ required thirteen steps while the original sequential plan only had three steps. Likewise our pointer-based version of plan $\pi_2$ required ten steps while the original sequential plan only had two steps. On the other hand, our extension does not require explicit action grounding; it is the planner itself who determines the values of the pointers that now feed the action schemas, so in a sense the planner is in charge of doing a partial instantiation. Further, our extension separates the instantiation of each parameter of an action schema. The computation of sequential plans with our pointer-based formalism is however out of the scope of this paper. We will exclusively address the computation of planning solutions represented as planning programs.  

\begin{theorem}
\label{theor:actions}
The new set of actions $A_Z'$ is independent of the number of objects $\Omega$, state variables $X$, and their domain size. 
\end{theorem}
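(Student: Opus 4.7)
The plan is to decompose $A_Z'$ into its constituent action families and argue that each family is specified purely in terms of quantities that belong to the domain description, not to any particular instance. Specifically, I would enumerate (i) the pointer-based planning actions $A'$, (ii) the RAM actions over pointers alone, and (iii) the RAM actions that access state variables through function symbols, and show that in each case both the membership rule and the cardinality are determined by $\Xi$, $\Phi$, and $|Z|$, with no reference to $\Omega$, $X$, or the per-variable domains $D_x$.

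For the planning actions, I would appeal directly to the construction in the previous subsection: each scheme $\xi \in \Xi$ is reformulated into exactly one pointer-based schema in which $par(\xi)$ is replaced by pointers from $Z$, yielding $|A'| = |\Xi|$, which is fixed by the domain. For the pointer-only RAM actions, I would simply count: $\{\mathsf{inc}(z_1), \mathsf{dec}(z_1) \mid z_1 \in Z\}$ contributes $2|Z|$ actions, and $\{\mathsf{cmp}(z_1,z_2), \mathsf{set}(z_1,z_2) \mid z_1,z_2 \in Z\}$ contributes $2|Z|^2$ actions, both depending only on $|Z|$. For the function-accessing RAM actions, the families $\{\mathsf{test}(\phi(\overrightarrow{z_1}))\}$ and $\{\mathsf{cmp}(\phi(\overrightarrow{z_1}),\phi(\overrightarrow{z_2}))\}$ are indexed by $\phi \in \Phi$ and tuples from $Z^{ar(\phi)}$, so their combined cardinality is bounded by $\sum_{\phi \in \Phi}\bigl(|Z|^{ar(\phi)} + |Z|^{2\cdot ar(\phi)}\bigr)$, again a function solely of the domain signature and $|Z|$.

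The step that requires the most care — and what I would flag as the main obstacle — is conceptual rather than computational: one must be explicit that the actions in $A_Z'$ are \emph{syntactic} objects built from pointer symbols and function symbols, and that the semantic effect of executing such an action on a concrete state (which of course does depend on the current object referenced by each pointer, and hence indirectly on $\Omega$) is not what the theorem is about. In particular, the derived state variables $X_Z = \{\phi(\overrightarrow{z}) \mid \phi \in \Phi,\ \overrightarrow{z} \in Z^{ar(\phi)}\}$ that appear inside $\mathsf{test}$ and $\mathsf{cmp}$ are addressed indirectly through pointers rather than being indexed by ground objects, so the set of action symbols does not grow when $\Omega$ or $X$ grows. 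I would also note that, since $X$ is generated from $\Phi$ and $\Omega$ via $x \equiv \phi(\overrightarrow{o})$, independence from $|X|$ reduces to independence from $|\Omega|$ once $\Phi$ is fixed by the domain.

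Putting the three bounds together gives
\[
|A_Z'| \;\leq\; |\Xi| + 2|Z| + 2|Z|^2 + \sum_{\phi \in \Phi}\bigl(|Z|^{ar(\phi)} + |Z|^{2\cdot ar(\phi)}\bigr),
\]
in which every quantity on the right-hand side is fixed once the domain $(\Phi,\Xi)$ and the pointer budget $|Z|$ are chosen. Since none of $|\Omega|$, $|X|$, or $|D_x|$ appears, $A_Z'$ is the same set of actions for every classical planning instance built from the same domain with the same pointer budget, which is exactly the claim.
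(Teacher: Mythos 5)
Your argument follows essentially the same route as the paper: decompose $A_Z'$ into the pointer-based planning actions $A'$ and the RAM actions, count each family, and observe that every count is a function of $|Z|$ and of the arities in $\Phi$ and $\Xi$ alone, hence independent of $|\Omega|$, $|X|$ and the domains $D_x$. One correction, though: your claim that $|A'|=|\Xi|$ does not match the construction. The reformulation replaces the parameters of each schema by pointers from $Z$, and each distinct assignment of pointers to parameters yields a distinct action (with $Z=\{i,j\}$ the sorting domain contains both $\mathsf{swap}(i,j)$ and $\mathsf{swap}(j,i)$), so the correct statement is $|A'|\leq\sum_{\xi\in\Xi}|Z|^{|par(\xi)|}$, and your displayed inequality with the term $|\Xi|$ is not a valid upper bound in general. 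This slip does not damage the theorem, since the corrected bound still depends only on $\Xi$ and $|Z|$, but the count should be fixed; your remark distinguishing the syntactic action set from its instance-dependent semantics is a fine addition and consistent with the paper's intent.
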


\begin{proof}
The number of actions of a  classical planning instance, extended with a RAM machine of $|Z|$ pointers, is
\begin{equation}
	|A_Z'|=2|Z|^2 + \sum_{\phi\in\Phi} |Z|^{2ar(\phi)} + |A'|.
	\label{eq:nactions}
\end{equation}
This number exclusively depends on the number of pointers in $Z$ and on the arity of the functions in $\Phi$ and the action schemes in $\Xi$.  First, the {\em increment}/{\em decrement} instructions induce $2|Z|$ actions, the {\em set} instructions over pointers induce $|Z|^2-|Z|$ actions, and {\em comparison} instructions of pointers induce $|Z|^2-|Z|$ actions. The comparison instructions can compare two pointers but for symmetry breaking, we only consider the single parameter ordering $(z_i,z_j)$ where $i<j$, i.e. we consider~{\tt\small cmp($z_1$,$z_2$)} but not {\tt\small cmp($z_2$,$z_1$)}. Second, {\em test} instructions are defined over each function symbol and list of pointers with the same size as its arity, inducing $\sum_\phi |Z|^{ar(\phi)}$ actions, and {\em comparison} of predicates with pointers induce $\sum_\phi (|Z|^{2ar(\phi)} - |Z|^{ar(\phi)})$ actions. Therefore, the total number of RAM instructions are $2|Z|+2(|Z|^2-|Z|)+\sum_\phi (|Z|^{ar(\phi)} + |Z|^{2ar(\phi)}-|Z|^{ar(\phi)}) =2|Z|^2 + \sum_\phi|Z|^{2ar(\phi)}$ which only depends on the number of pointers in $Z$ and the arity of each function symbol $\phi$.
Last, as defined by our abstraction procedure, the number of actions in $A'$ is given by the number of parameters of the actions schemes $\Xi$ and the number of pointers in $Z$ to replace these parameters. This means that the size of $A'$ is upper bounded by $|A'|\leq \sum_{\xi\in \Xi} |Z|^{|par(\xi)|}$. As a consequence it follows that $A_Z'$, whose size is given by $|A_Z'| = 2|Z|^2 + \sum_\phi |Z|^{2ar(\phi)} + |A'|$, it is also independent of the number of objects $\Omega$, state variables in $X$ and their domain size.  
\end{proof}

\subsection{\strips\ with pointers}
Since the early 70's, the \strips{} representation formalism is widely used for research in {\em automated planning}~\cite{fikes1971strips}. Even today, \strips{} is an essential fragment of PDDL~\cite{haslum2019introduction}, the input language of the {\em International Planning Competition}, and most planners support the \strips{} representation features.  Here we show that our pointer-based view of planning problems and solutions applies also to object-centered planning formalisms, such as \strips\ planning. In fact, our pointer-based formalism can be understood as an instantiation of {\em F-\strips{}}~\cite{geffner2000functional}, where the single level of indirection of pointers over objects is enough to represent~\strips{} problems with constant memory access. 

\strips\ is an object-centered planning formalism that compactly represents the set of states of a transition system using a finite set of {\em objects}, and a finite set of FOL {\em predicates}, that indicate the properties of the objects and their relations. Likewise, \strips\ compactly represents the space of the possible state transitions using FOL {\em operators}, which are defined as a tuple $op=\langle name(op),$ $args(op),$ $pre(op),$ {\it eff}$\,^-(op),$ {\it eff}$\,^+(op)\rangle$ and where, $name(op)$ is a unique identifier of the operator, $args(op)$ is a set of variable symbols specifying the arguments of the operator, and $pre(op)$, {\it eff}$\,^-(op),$ {\it eff}$\,^+(op)$ are sets of FOL predicates, with variables exclusively taken from $args(op)$, and that respectively specify the {\em preconditions}, {\em negative effects} and {\em positive effects}. The representation of a \strips\ problem is completed specifying an {\em initial} state, that defines the initial situation for all the objects, and the aimed set of {\em goal states}, which is typically specified as a partial state. 

\begin{figure}
\centering
\begin{minipage}{0.4\textwidth}
\begin{tikzpicture}[block/.style= {rectangle, draw=black, thick, text centered, node distance=.5cm}]
\node [block] (b1) [] {1};
\node [block] (b2) [below of= b1] {2};
\node [block] (b3) [below of=b2] {3};
\draw [thick] (-1,-1.25) -- (1,-1.25) node [] {};
\end{tikzpicture}
\end{minipage}
\hspace{-3.cm}
\begin{minipage}[t]{0.5\textwidth}
\begin{small}
\begin{tabular}{c|c|c}
                  & \multicolumn{2}{c}{\bf State representation}\\
  {\bf Predicate} & {\bf Strips} & {\bf Boolean functions}\\\hline
{\tt (clear ?x)} & {\tt (clear b1)} & {\tt clear(b1)}=1 \\
{\tt (handempty)} & {\tt (handempty)} & {\tt handempty()}=1 \\
{\tt (holding ?x)} & - & - \\
{\tt (on ?x ?y)} & {\tt (on b1 b2) (on b2 b3)} & {\tt on(b1,b2)}=1, {\tt on(b2,b3)}=1\\
{\tt (ontable ?x)} & {\tt (ontable b3)} & {\tt ontable(b3)}=1 
\end{tabular}
\end{small}
\end{minipage}
	\caption{\small Example of a three-block  state from the {\em blocksworld}  (left), and its corresponding  representation as a vector of bits (right).} 
	\label{fig:tower}
\end{figure}

{\bf State representation}. In our pointer-based formalism for~\strips{} problems, each state variable $x\in X$ has domain $D_x = \{0,1\}$, and it is built as a FOL \strips\ predicate $\phi\in\Phi$ grounded by a vector of objects $\overrightarrow{o}\in\Omega^{ar(\phi)}$. Figure~\ref{fig:tower} shows the representation of a {\em blocksworld} state using the \strips\ formalism as well as using our formalism. In this state there are three blocks, $\Omega=\{b1, b2, b3\}$, that are stacked in a single tower. Predicates {\small\tt clear(?x)}, {\small\tt holding(?x)}, and {\small\tt ontable(?x)}, are encoded as three different Boolean functions that map each vector of objects to either $0$ or $1$ in the current state. Omitted state variables are assumed to be zero valued. Our vector $X$ of state variables is the result of unifying all the previous predicate and object tuple valuations into a vector. The length of the vector of state variables is then upper bounded by $|X|\leq \sum_{k\geq 0} n_k|\Omega|^k$, where $n_k$ is the number of first-order predicates with arity $k$. For instance, the $X$ vector contains at most $|\Omega|^2 + 3|\Omega| + 1$ state variables for the {\em blocksworld} domain. State-invariants (e.g. in the {\em blocksworld} a block cannot be on top of two different blocks simultaneously) can also be leveraged to save space for the memory allocation of the state variables.

{\bf Action representation}. 
Given a FOL \strips\ operator $op=\langle name(op),$ $args(op),$ $pre(op),$ {\it eff}$\,^-(op),$ {\it eff}$\,^+(op)\rangle$, our pointer-based formalism produces its corresponding pointer-based action schema $\langle${\it name, params, pre, eff}$\rangle$:
\begin{itemize}
\item The name of the action schema is $name(op)$, the name of the given FOL \strips\ operator.
\item For each argument in $args(op)$, the action schema has a {\em pointer} that indexes an object $o \in \Omega$.
\item The set $pre(op)$ is transformed into a conjunctive arithmetic-logic expression with conditions of two kinds: (i) for each {\em pointer} in the parameters of the action schema, the  conditions asserting that the pointer is within its domain and (ii), for each precondition in $pre(op)$ a condition  asserting that the state variable addressed by the pointers content equals to some specific value of its domain. 
\item Each negative effect in {\it eff}$\,^-(op)$ is transformed into an indirect variable assignment that sets the corresponding state variable to 0. Likewise, each positive effect in {\it eff}$\,^+(op)$ is transformed into an indirect variable assignment that sets the corresponding state variable to 1.
\end{itemize}

Figure~\ref{fig:unstack2} shows our pointer-based definition for the {\tt\small unstack} action schema from the {\em blocksworld} that implements the corresponding operator represented in the \strips\ fragment of PDDL of Figure~\ref{fig:unstack1}. The action schema of Figure~\ref{fig:unstack2} is implemented using two {\em pointers} ($i$ and $j$), and it applies to any {\em blocksword} instance, no matter the number of blocks or their identity. In this implementation the state variables are {\em global}, meaning that they can be accessed from any of the action schemas. Again the execution of these procedures is a deterministic matching-free process since the input pointers do always have a block assigned.

\begin{figure}
\begin{small}
\begin{lstlisting}[basicstyle=\ttfamily,columns=fullflexible,keepspaces=true]
(:action unstack
	 :parameters (?x ?y)
	 :precondition (and (clear ?x) (handempty) (on ?x ?y))
	 :effect (and (holding ?x) (clear ?y)
              (not (clear ?x)) (not (handempty)) (not (on ?x ?y)))))
\end{lstlisting}
	\caption{\small The {\tt unstack} \strips\ operator from the {\em blocksworld} domain represented in the PDDL language.}
	\label{fig:unstack1}	
\end{small}
\end{figure}
\begin{figure}
\begin{small}
\begin{lstlisting}[basicstyle=\ttfamily,columns=fullflexible,keepspaces=true]
Bool schema_unstack_pre (Pointer i, Pointer j) {	 
     Return (i>=0 and j>=0 and i<|$\Omega$| and j<|$\Omega$| and
             clear(i)=1 and handempty()=1 and on(i,j)=1);
}	

void schema_unstack_eff (Pointer i, Pointer j) {
     clear(i) := 0;  handempty() := 0;  on(i,j) := 0;	 
     holding(i) := 1; clear(j) := 1;   
}		   
\end{lstlisting}
\end{small}
	\caption{\small The {\tt\small unstack} action schema from {\em blocksworld} defined with two pointers ($i$ and $j$).}
	\label{fig:unstack2}
\end{figure}

{\bf Problem representation}. We complete our pointer-based representation of a \strips{} problem with the {\em init} and {\em goal} procedures. Figure~\ref{fig:problem} shows the {\em init} and {\em goal} procedures for the planning problem of unstacking the 3-block tower of Figure~\ref{fig:tower}. They have the same formal structure as the procedures we use for our pointer-based representation of the {\em preconditions} and {\em effects} of an action schema, but with no arguments. In more detail: 
\begin{itemize}
    \item The {\em init procedure} is a write-only procedure, that implements a total variable assignment of the state variables for specifying the initial state of the \strips{} problem. 
    \item The {\em goal procedure} is a read-only Boolean procedure, that encodes the state-constraint that specifies the subset of goal states. 
\end{itemize}

{\bf Plan representation}. Following our pointer-based representation, a sequential plan $\pi=\tup{a_1, \ldots, a_n}$ is a sequence of transformations of the state variables using: (i) our pointer-based action schemas that encode the FOL \strips{} operators and (ii), the subset of RAM actions $\{{\tt\small inc}(z)$, ${\tt\small dec}(z) | \; z \in Z\}$ for achieving the aimed binding for each parameter of an action schema.  Our pointer-based representation of the four-action  plan $\pi=\langle${\tt\small unstack(b1,b2)}, {\tt\small putdown(b1)}, {\tt\small unstack(b2,b3)}, {\tt\small putdown(b2)}$\rangle$ for unstacking the three-block tower of Figure~\ref{fig:tower}, is the following sequence of actions $\pi'=\langle inc(j)$, $unstack(i,j)$, $putdown(i)$, $inc(i)$, $inc(j)$, $unstack(i,j)$, $putdown(i)\rangle$ that leverages two {\em pointers} $Z=\{i,j\}$, that are defined over the set of blocks $\Omega$.

In spite of its popularity, the \strips{} representation is too low-level for many interesting   applications~\cite{geffner2003pddl,smith2008anml,barreiro2012europa}. Our pointer-based representation naturally extends beyond \strips\ to more expressive object-oriented  representations. For instance, state variables can also comprise numeric variables (e.g. integers or reals) to implement {\em numeric fluents} as in PDDL2.1~\cite{fox2003pddl2} or in RDDL~\cite{sanner2010relational}. Object {\em typing} can also be  implemented in a straightforward way (e.g.~specializing pointers to the number of objects of a particular type) to compact the size of the vector of state-variables and to optimize the implementation of quantified preconditions/effects/goals~\cite{pednault1994adl}. Last, our pointer-based representation also supports {\em conditional effects}~\cite{nebel2000compilability}, e.g. an action schema can specify multiple variable assignments conditioned by the different values of the state variables.

\begin{figure}
\begin{small}
\begin{lstlisting}[basicstyle=\ttfamily,columns=fullflexible,keepspaces=true]
void init() {
  i := 0; j := 0;
  $y_z$ := 0; $y_c$ := 0;
  clear(b1) := 1;  on(b1,b2) := 1; on(b2,b3) := 1;  ontable(b3) := 1;  
}

Bool goals() {
  Return (ontable(b1)=1 and ontable(b2)=1 and ontable(b3)=1);
}
\end{lstlisting}
\end{small}
	\caption{\small The {\em init} and {\em goal} procedures for representing the \strips{} planning problem of unstacking the three-block tower of Figure~\ref{fig:tower}.}
	\label{fig:problem}
\end{figure}

\section{Generalized planning as heuristic search} 
\label{sec:hsearch}
First the section shows how we build, in an agnostic manner, a GP problem from a set of classical planning instances of a given domain. Then, the section describes in detail our {\em GP as heuristic search} approach: our search space for GP, the evaluation/heuristic functions that we use for guiding the search, and the particular details of our search algorithm.

\subsection{From a set of classical planning instances to a GP problem}
Generalized plans leverage relevant subsets of shared state variables ({\em features}) and actions whose execution is well-defined for any possible value of the state variables of the classical planning instances to be solved. Building a GP problem from a set of classical planning instances of a given domain is not trivial because these two ingredients may not be given in the representation of the classical planning instances.

On the one hand given a classical planning domain, the specification of a set of features that is (i), {\em expressive} enough to represent a polynomial solution valid for any instance in the domain and (ii), {\em compact} enough for the effective computation of that solution, is a complex task that requires expert knowledge on both the domain and the aimed solution. In fact, the automatic specification of {\em expressive} and {\em compact} features for a planning domain is a challenging research question that is investigated since the early days of  automated planning~\cite{veloso1995integrating}. On the other hand the set of ground actions for the different instances of a given domain, is usually different since it depends on the number of objects. Back to the sorting example illustrated in Figures~\ref{fig:cp-example} and~\ref{fig:sortnet}, the classical planning instances for sorting a vector of length six induced $\frac{6\times 5}{2}$ $swap(o_i,o_j)$, $i<j$ actions, while instances for sorting a vector of length seven would induce a set of  $\frac{7\times 6}{2}$ $swap(o_i,o_j)$ actions.

Given a finite and non-empty set of $T$ classical planning instances from a given domain, our approach for automatically building a GP problem is to extend the instances with a RAM machine of $|Z|$ pointers. The result is a GP problem $\mathcal{P}=\{P_1,\ldots,P_T\}$, where each instance $P_t\in\mathcal{P}$,  {\small $1\leq t \leq T$} may differ in the actual set of objects, initial state, and goals, but all instances necessarily share the subset of state variables $X_Z\cap Y\cap Z$ and the same set of actions $A_Z'$. Formally, $P_1=\tup{X'_{Z_1},A_Z',I'_{Z_1},G_1}, \ldots, P_T=\tup{X'_{Z_T},A_Z',I'_{Z_T},G_T}$ where $\forall_{1\leq t\leq T} X_{Z_t}\subset X'_{Z_t}$, $\forall_{1\leq t\leq T} Y\subset X'_{Z_t}$, $\forall_{1\leq t\leq T} Z\subset X'_{Z_t}$, and $\forall_{a'\in A_Z'} par(a')\in Z^{ar(a')}$. The number of pointers $|Z|$ is a {\em parameter} that indicates how many pointers are used in the extension of the classical planning instances \footnote{At least $Z$ must contain as many pointers as the largest arity of the functions $\Phi$ and action schemes $\Xi$ of the given domain.}. 

Our extension with a RAM machine of $|Z|$ pointers automatically defines a minimalist but general set of features for the set of a classical planning instances from a given domain. 
\begin{definition}[The feature language]
	\label{def:flanguage}
We define the feature language as the four possible joint values of the two Boolean variables $Y=\{y_z,y_c\}$, and we denote this language as $\mathcal{L}=\{(\neg y_z\wedge \neg y_c), (y_z\wedge \neg y_c), (\neg y_z\wedge y_c), (y_z\wedge y_c)\}$.  
\end{definition}

We say $\mathcal{L}$ is minimalist because it only contains four elements, and we say $\mathcal{L}$ is general because it is independent of the number of objects and hence, of the domain of the state variables and the number of state variables. Note that our features are a function of (i) the state variables and (ii) the last executed action, since they all may affect the value of $Y=\{y_z,y_c\}$. Such notion of feature is related to the notion of state {\em observation} in the POMDP formalism, where observations depend on the current state and the action just taken~\cite{roy2005finding}. With this regard it can be understood that our GP approach computes, at the same time, a generalized plan and an {\em observation function} useful for that generalized plan. Our feature language is also related to {\em Qualitative numeric planning}~\cite{srivastava2011qualitative,bonet2019qualitative,illanes2019generalized} which leverages propositions to abstract the value of numeric state variables. Given that our FLAGS $Y=\{y_z,y_c\}$ depend on the last executed action, and considering that only RAM instructions update the variables in $Y$, we have an observation space of $2^{|Y|}\times (2|Z|^2 + \sum_\phi |Z|^{2ar(\phi)})$ state observations implemented with only $|Y|$ Boolean variables. The four joint values of $\{y_z,y_c\}$ model then a large space of observations, e.g.~$=\,$0, $\neq\,$0, $<0, >0, \leq 0, \geq 0$ as well as relations $=, \neq, <, >, \leq, \geq$ on pairs of state variables.

Likewise, our extension with a RAM machine of $|Z|$ pointers automatically defines the shared set of actions $A_Z'$, that is well-defined for the set of a classical planning instances from a given domain. Because the set of pointers $Z$ is fixed for  the $T$ input classical planning instances we have that, after our extension, all the instances share the same set of actions $A_Z'$. The execution of the actions in $A_Z'$ is well-defined over the subset of state variables $Z$, no matter the actual number of objects, or the corresponding number and domain size of the state variables; we recall the reader that the set of actions $A_Z'$ exclusively depends on the number of pointers $|Z|$ and the arity of actions and functions (Theorem~\ref{theor:actions}).

\subsection{The search space}
Briefly, our {\em GP as heuristic search} approach implements a combinatorial search in the solution space of the possible planning programs. Next we provide more details on how we implemented a tractable search space for GP.

\begin{definition}[Partially specified planning program]
	\label{def:partialp}
A {\em partially specified planning program} is a planning program such that the content of some of its program lines may be undefined.
\end{definition}

Each node of our search space is a {\em partially specified planning program} which is binary encoded as follows. Given a set of state variables $X$, a set of actions $A$, a maximum number of program lines $n$ such that the last instruction is $w_{n-1}=end$, and defining the propositions of $\mathsf{goto}$ instructions as $(x=v)$ atoms where $x\in X$ and $v\in D_x$, we have that the space of possible planning programs is represented by the following bit-vectors:
\begin{enumerate}
	\item The {\em action vector} of length $(n-1)\times |A|$, indicating whether an action $a\in A$ is programmed on line $0\leq i< n-1$.
	\item The {\em transition vector} of length $(n-1)\times (n-2)$, indicating whether a $\mathsf{go}(i',*)$ instruction is programmed on line $0\leq i< n-1$. 
	\item The {\em proposition vector} of length $(n-1)\times\sum_{x\in X}|D_x|$, indicating whether a $\mathsf{go}(*,!\tup{x=v})$ instruction is programmed on line {\small $0\leq i< n-1$}.
\end{enumerate} 
A {\em partially specified planning program} is then encoded as the concatenation of these three bit-vectors and the length of the resulting bit-vector is: 
\begin{equation}
	(n-1) \left( |A|+ (n-2) + \sum_{x\in X}|D_x| \right).
	\label{eq:size}
\end{equation}

The binary encoding allows us to quantify the similarity of two {\em partially specified planning programs} (e.g. the {\em Hamming distance} of their corresponding bit-vector representation) and more importantly, to systematically enumerate the space of all the possible planning programs with a maximum of $n$ lines. Let us define the {\em empty program} as the particular partially specified planning program whose  instructions are all undefined (i.e. all bits of its bit-vector representation are set to {\tt\small False}). Starting from the {\em empty program}, we can enumerate the entire set of possible planning programs with two search operators: 
\begin{itemize}
    \item {\tt program(i,a)}, that programs an action $a\in A$ at line $i$ of a program 
    \item {\tt program(i,i',x,v)}, that programs a $\mathsf{goto}(i',!\tup{x=v})$ instructions at line $i$ of a program. 
\end{itemize}

These two search operators are only applicable when $i$ is an undefined program line (i.e. in the bit-vector representation the bits corresponding to the encoding of the program line $i$ are set to {\tt False}). Given the bit-vector representation of a {\em partially specified planning program}, the application of the {\tt program(i,a)} or {\tt program(i,i',x,v)} search operators set to {\tt True} the corresponding bits. With this regard, the partially specified planning program of a given search node is at {\em Hamming distance 1} from its parent, when programming a planning action with {\tt program(i,a)}, or at {\em Hamming distance 2}, when programming a $\mathsf{goto}$ instruction with {\tt program(i,i',x,v)}. In fact, this is the search space leveraged by the classical planning compilation approach for computing planning programs with an off-the-shelf classical planner~\cite{segovia2019computing}. Equation~\ref{eq:size} reveals that the number of planning programs with $n$ lines depends on the number of grounded actions $|A|$, the number of state variables $x\in X$, and their domain $D_x$. This dependence causes an important scalability issue, limiting the applicability of the cited compilation to planning instances of contained size. 

\begin{definition}[The GP search space]
	\label{def:gp-space}
	Given a {\em GP problem} $\mathcal{P}$, that is built extending a set $\{P_1,\ldots,P_T\}$ of classical planning instances from a given domain with a RAM machine of $|Z|$ pointers. Our GP search space is the set of partially specified planning programs that can be built with $n$ program lines, the set of planning actions $A_Z'$, and the set of $\mathsf{goto}$ instructions that are exclusively conditioned on a feature in $\mathcal{L}$.
\end{definition}

Definition~\ref{def:gp-space} leverages our minimalist feature language $\mathcal{L}$ to build a tractable solution space for GP. We represent GP solutions as {\em planning programs} where $\mathsf{goto}$ instructions can exclusively be conditioned on a feature in $\mathcal{L}$.  Limiting the conditions of $\mathsf{goto}$ instructions to any of the four features in $\mathcal{L}$ greatly reduces the number of planning programs with $n$ lines, specially when state variables have large domains (i.e.~{\em integers}); the {\em proposition vector} required to encode a planning program becomes now a vector of only $(n-1)\times 4$ bits (one bit for each of the four features in $\mathcal{L}$). Equation~\ref{eq:size} simplifies then to:
\begin{equation}
	(n-1) \left( |A_Z'| + (n-2) + 4 \right).
	\label{eq:size2}
\end{equation}
Equation~\ref{eq:size2} shows that the size of our new solution space for GP is independent of the number of objects and hence the number of original state variables and their domain size; Theorem~\ref{theor:actions} already showed that $A_Z'$ no longer grows with the number of objects. This novel GP solution space can now scale to planning problems where state variables have large domains (e.g. integers) and that have a large number of state variables. 

Last but not least, since the $Y=\{y_z,y_c\}$ {\sc FLAGS} store the outcome of three-way comparisons, the fourth case $(y_z\wedge y_c)\in \mathcal{L}$ can actually never happen as a result of a comparison. This fourth case is however useful for representing unconditional {\tt\small gotos}.

{\bf Example.} Figure~\ref{fig:lists} shows two examples of planning programs that can be found by our {\sc BFGP} algorithm, that searches in our solution search-space for GP: (left) a generalized plan for reversing a list, and (right) a generalized plan for sorting a list. Note that in both planning programs $\mathsf{goto}$ instructions are exclusively conditioned on a feature in $\mathcal{L}$, and that both planning programs are actually solutions for an infinite set of classical planning problems; they generalize with a $\mathsf{swap}$ action schema of arity $2$ and a $vector$ predicate symbol of arity $1$, no matter the number of objects $\Omega$ and no matter the state variables content, i.e. $x_i\equiv vector(o_i)$ such that $o_i\in \Omega$, $x_i\in X$ and $D_{x_i} \subseteq \mathbb{N}_0$.  In the planning program for reversing a list (left), line 0 sets the pointer $j$ to the last element of the list. Then, line 1 swaps in the $vector$ the element pointed by $i$ (initially set to zero) and the element pointed by $j$, pointer $j$ is decremented, pointer $i$ is incremented, and this sequence of instructions is repeated until the condition on line 5 becomes false, i.e when $j>i$, which means that reversing the list is finished.  The planning program for {\em sorting} a list (right) is actually an implementation of the {\em selection-sort} algorithm. In this program, pointers $j$ and $i$ are used for inner (lines 5-7) and outer (lines 8-11)  loops respectively, and $min$ to point to the minimum value in the inner loop (lines 3-4); $\neg y_z\wedge \neg y_c$ on line $2$ represents whether the content of $vector(j)$ is less than the content of $vector(min)$, while $y_z\wedge\neg y_c$ on line 7 represents whether $j==length$ (resp.~$i==length$ on line 11).

\begin{figure}
\small
		\begin{subfigure}[t]{.45\columnwidth}
			\begin{lstlisting}[mathescape]
            0. set(j,tail)
            1. swap(i,j)
            2. dec(j)
            3. inc(i)
            4. cmp(j,i)
            5. goto(1, $\neg (\neg y_z\wedge\neg y_c)$)
            6. end			
			\end{lstlisting}
		\end{subfigure}	
\hspace{-2.5cm}		
		\begin{subfigure}[t]{.45\columnwidth}
			\begin{lstlisting}[mathescape]
			0. set(min,i)
			1. cmp(vector(j),vector(min))
			2. goto(5, $\neg (\neg y_z\wedge \neg y_c)$)
			3. set(min, j)
			4. swap(i,min)
			5. inc(j)
			6. cmp(length,j)
			7. goto(1, $\neg (y_z\wedge\neg y_c)$)
			8. inc(i) 
			9. set(j,i)
			10. cmp(length,i)
			11. goto(0, $\neg (y_z\wedge\neg y_c)$)    
			12. end
			\end{lstlisting}
		\end{subfigure}	
	\caption{\small Two examples of generalized plans: (left) for {\em reversing} a list; (right) for {\em sorting} a list with the {\em selection-sort} algorithm.}
	\label{fig:lists}
\end{figure}

\subsubsection{Theoretical properties} 

\begin{theorem}
The space of planning programs that 
exclusively branch over the features in $\mathcal{L}$ preserves the solution space of planning programs.
\end{theorem}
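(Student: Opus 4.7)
The proof will be a simulation argument establishing inclusion in both directions between the solution sets of the two spaces. The easy direction is that any planning program whose $\mathsf{goto}$ conditions are drawn from $\mathcal{L}$ is already a syntactically legitimate general planning program: each $f\in\mathcal{L}$ is a Boolean combination of the atoms $(y_z=0),(y_z=1),(y_c=0),(y_c=1)$ over the FLAGS variables $Y\subset X'_{Z}$, which are valid $(x=v)$ atoms in the general formalism. Hence a program in the restricted search space is, without modification, a program in the general search space that solves the same subset of instances.

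For the non-trivial direction I would show that every solution program $\Pi$ in the general space can be compiled into an equivalent program $\Pi'$ whose $\mathsf{goto}$ conditions are restricted to $\mathcal{L}$. The construction is local and line-wise: I walk through $\Pi$ and leave every planning action and every $\mathsf{end}$ instruction untouched; for each line $i$ carrying an instruction $\mathsf{go}(i',!(x=v))$, I splice in a short ``evaluator block'' consisting of (i) a sequence of RAM actions (drawn from $A'_Z$, essentially an appropriate $\mathsf{test}(\phi(\vec{z}))$ or $\mathsf{cmp}(\phi(\vec{z_1}),\phi(\vec{z_2}))$ possibly preceded by pointer manipulations that make the pointers address the correct $\vec{o}$) that forces the FLAGS $Y$ to encode the truth value of $(x=v)$, followed by (ii) a single $\mathsf{go}(i',!f)$ with $f\in\mathcal{L}$ reading that FLAG pattern off. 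All line numbers in the rest of $\Pi'$ are shifted to account for the inserted RAM instructions, and all $\mathsf{goto}$ targets are re-mapped to the new indices of the corresponding lines.

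The core technical obligation, and the main obstacle, is the claim that such an evaluator block exists for every atom $(x=v)$ that appears as a goto proposition in $\Pi$. For Boolean state variables (the \strips{} setting) this is immediate: after aiming the relevant pointers, a single $\mathsf{test}(\phi(\vec{z}))$ makes $y_z=(\phi(\vec{z})=0)$, so $(x=0)$ corresponds to the feature $(y_z\wedge\neg y_c)$ and $(x=1)$ to $(\neg y_z\wedge\neg y_c)$. For numeric variables the evaluator block can be obtained by $\mathsf{cmp}(\phi(\vec{z_1}),\phi(\vec{z_2}))$ with pointers positioned so that $\phi(\vec{z_2})$ equals the constant $v$ of interest, using the three-way comparison semantics of the FLAGS; conditions not directly expressible as $(x=v)$ but as ranges can be decomposed into a sequence of comparisons threaded through intermediate $\mathcal{L}$-conditioned gotos. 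This is the step that needs to be carried out with care.

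Finally I would close by arguing semantic equivalence: because the RAM actions inserted by the evaluator block only modify the FLAGS $Y$ (they do not touch the original state variables in $X$), and because the $\mathcal{L}$-conditioned goto in the block fires in exactly the same program states as the original $(x=v)$-conditioned goto, the execution trace of $\Pi'$ on any planning instance $P_t$ projects onto $X$ to the trace of $\Pi$ on $P_t$. In particular $\Pi'$ ends in a goal state whenever $\Pi$ does, so $\Pi'$ solves $\mathcal{P}$ iff $\Pi$ does. Combining the two inclusions gives equality of the two solution spaces.
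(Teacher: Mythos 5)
Your overall strategy (two-way simulation, compiling each $\mathsf{goto}(i',!(x=v))$ into a block of RAM actions that loads the truth value of $(x=v)$ into the FLAGS and then branching on a feature of $\mathcal{L}$) is the same as the paper's, but the step you yourself flag as ``the core technical obligation'' is precisely where your sketch does not go through, and it is the step the paper's proof is really about. For a numeric variable $x\equiv\phi(\overrightarrow{o})$ you propose to issue $\mathsf{cmp}(\phi(\overrightarrow{z_1}),\phi(\overrightarrow{z_2}))$ ``with pointers positioned so that $\phi(\overrightarrow{z_2})$ equals the constant $v$ of interest.'' Pointer increments and decrements only change which state variable is addressed; they cannot make any addressed variable take the value $v$, and in a given program state there need not exist \emph{any} state variable whose current value is $v$. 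So no amount of pointer repositioning produces the comparand, and the evaluator block you need is not shown to exist. The paper closes exactly this hole by augmenting the instance with auxiliary \emph{static} state variables, one per value $v\in D_x$, each carried by a dedicated fresh object $o_v$ with $\phi_{static}(o_v)=v$, together with a dedicated pointer $z_{static}$ that is stepped until it addresses $o_v$; only then does $\mathsf{cmp}(\phi(\overrightarrow{z_{aux}}),\phi_{static}(z_{static}))$ followed by $\mathsf{goto}(i,!(y_z\wedge\neg y_c))$ reproduce the original test. Without this (or some equivalent device for materializing the constant $v$), your $\Leftarrow$ direction is incomplete for any non-Boolean domain.

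Two smaller points. First, your equivalence argument asserts that the inserted RAM actions ``only modify the FLAGS''; in fact $\mathsf{inc}/\mathsf{dec}/\mathsf{set}$ also modify pointers, which are state variables, so you must (as the paper does) reserve \emph{auxiliary} pointers $\overrightarrow{z_{aux}}$, $z_{static}$ for the evaluator blocks rather than reuse the program's own pointers, and note that the original gotos never read the FLAGS so clobbering $Y$ is harmless. Second, your Boolean case has the flag pattern wrong: after $\mathsf{test}(\phi(\overrightarrow{z}))$ with $x=1$ the result is positive, so the FLAGS read $(\neg y_z\wedge y_c)$, not $(\neg y_z\wedge\neg y_c)$; this is easily repaired but should be stated correctly. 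Your easy direction is fine in spirit; the paper phrases it slightly differently, introducing four mutually exclusive Boolean variables tracking the joint value of $Y$ so that each feature becomes a single $(x=1)$ atom, which matches the restricted $(x=v)$ form of goto conditions rather than appealing to arbitrary propositions.
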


\begin{proof}
$\Rightarrow$: Given a GP problem $\mathcal{P}$ and a planning program $\Pi$, that solves $\mathcal{P}$ and whose {\tt\small goto} instructions are exclusively conditioned on the features in $\mathcal{L}$. An equivalent planning program whose execution flow branches with the original $\mathsf{goto(i,!(x=v))}$ instructions is built by defining a set of four Boolean state variables; one Boolean variable for each feature in $\mathcal{L}$, that are mutually exclusive, and whose value is actually given by the joint value of the $Y=\{y_z,y_c\}$ variables. With these new Boolean state variables defined, we can replace each feature in the condition of a $\mathsf{goto}$ instruction by a $!(x=1)$ condition checking whether its corresponding Boolean variable equals 1.

$\Leftarrow$: Given a GP problem $\mathcal{P}$ and a planning program $\Pi$ that solves $\mathcal{P}$. An equivalent planning program, that exclusively branches over any of the features in $\mathcal{L}$, is built replacing each $\mathsf{goto(i,!(x=v))}$ instruction in $\Pi$, where $x \equiv \phi(\overrightarrow{o})$ s.t. $\phi\in\Phi$ and $\overrightarrow{o}\in\Omega^{ar(\phi)}$, by a finite block of instructions that: (i) increments/decrements a vector of auxiliary pointers $\overrightarrow{z_{aux}}$, with size $ar(\phi)$, until they indirectly address objects $\overrightarrow{o}$, (ii) given auxiliary static state variables for each possible value, i.e. $\forall_{v\in D_x} x_v$, and a dedicated object for each new state variable $o_v$ such that $x_v \equiv \phi(o_v)$, increments/decrements another auxiliary pointer $z_{static}$ in a function $\phi{static}(z_{static})$ until it reaches object $o_v$ such that $x_{v}\equiv \phi{static}(o_v)$ which equals $v$, (iii) compares the content of these two state variables with a $\mathsf{cmp}(\phi(\overrightarrow{z_{aux}}),\phi{static}(z_{static}))$ instruction and (iv), jumps to the $i$ target line when the state variables differ in their content with a $\mathsf{goto(i,!(y_z\wedge\neg y_c))}$ instruction.
\end{proof}

Note however that our GP solution space may still be incomplete in the sense that either the bound $n$ on the maximum number of program lines, or the maximum number of pointers available $|Z|$, may be too small to accommodate a solution to a given GP problem.  In that case, a higher-level combinatorial search can be implemented to incrementally find a suitable number of required program lines and pointers. For instance, like is done in SAT-planning where the planning horizon is iterative incremented until it is large enough to accommodate a solution plan~\cite{rintanen2012planning}.

\subsection{The evaluation functions}
Here we define two different families of evaluation functions, that exploit two different sources of information, to guide a combinatorial search in our GP solution space of partially specified planning programs:
\begin{itemize}
	\item {\em The program structure}. Given a partially specified planning program $\Pi$, we define a set of evaluation functions $f(\Pi)$, that establish different kinds of {\em preferences/priors} on the structure of the aimed generalized plans. For instance, following the {\em Occam's razor principle} a structural function can prefer generalized plans of simpler complexity or it can prefer generalized plans with more programmed lines so program execution failures can be detected earlier in the search.
		\begin{itemize}
			\item$f_1(\Pi)$, the number of $\mathsf{goto}$ instructions in $\Pi$.
			\item$f_2(\Pi)$, the number of {\em undefined} program lines in $\Pi$.
			\item$f_3(\Pi)$, the number of repeated actions in $\Pi$,
			\item$f_7(\Pi)$, the max number of {\em nested} $\mathsf{goto}$ instructions in $\Pi$. A $\mathsf{goto}$ instruction jumps from an origin program line to a destination program line. We say that a $\mathsf{goto}$ instructions is {\em nested} when it appears within the origin and destination lines of another $\mathsf{goto}$ instruction.
		\end{itemize}
	
	\item {\em The empirical performance of the program}. Given a partially specified planning program $\Pi$ and a GP problem $\mathcal{P}=\{P_1,\ldots,P_T\}$, we define a set of evaluation functions $f(\Pi,\mathcal{P})$ that assess the performance of $\Pi$ on $\mathcal{P}$,  executing $\Pi$ on each of the classical planning instances $P_t\in\mathcal{P}$, {\small $1\leq t\leq T$}. Section~\ref{sec:background} defined the execution of a planning program on a classical planning instance as a deterministic procedure that terminates either succeeding to solve that instance or failing it. Likewise the execution of a partially specified planning program is a deterministic procedure that introduces a new termination case, {\em reaching an unspecified program line}. When the program execution terminates because an unspecified program line is reached, $f(\Pi,\mathcal{P})$ functions can be used to assess the cost of that program execution, as well as to estimate how far is the program from solving the given GP problem, which define {\it cost} and {\it heuristic} functions for GP respectively.
		\begin{itemize}
			\item$f_4(\Pi,\mathcal{P})=n-max_{P_t\in\mathcal{P}} f_4(\Pi,P_t)$, where $f_4(\Pi,P_t)$ returns the number of the undefined program line eventually reached after executing $\Pi$ on the classical planning instance $P_t\in \mathcal{P}$. 
			\item $f_5(\Pi,\mathcal{P})=\sum_{P_t\in \mathcal{P}} f_5(\Pi,P_t)$, where \[f_5(\Pi,P_t)=\sum_{x\in X_t} (v_x-G_t(x))^2.\] Here, $v_x\in D_x$ is the value eventually reached, for the state variable $x\in X_t$, after executing $\Pi$ on the classical planning instance $P_t\in\mathcal{P}$, and $G_t(x)$ is the value for this same variable as specified in the goals of $P_t$. Note that for Boolean variables the squared difference becomes a simple difference. This means that for \strips\ planning problems, where all the state variables are Boolean, $f_5(\Pi,P_t)$ is actually a counter of how many atomic goals in $G_t$ are still not true.
			\item $f_6(\Pi,\mathcal{P}) = \sum_{P_t\in{\cal P}} |exec(\Pi,P_t)|$, where $exec(\Pi,P_t)$ is the sequence of actions induced from executing the planning program $\Pi$ on the planning instance $P_t$.
			\item $f_8(\Pi,\mathcal{P}) = f_5(\Pi,\mathcal{P}) + f_6(\Pi,\mathcal{P})$ is the sum of an estimation to the goal and the total accumulated cost, akin to an evaluation function for $A^*$ searching algorithm.
			\item $f_9(\Pi,\mathcal{P}) = W\cdot f_5(\Pi,\mathcal{P}) + f_6(\Pi,\mathcal{P})$ is similar to $f_8$ but the estimation to the goal is multiplied by a factor $W$, which is set to $5$ by default, akin to an evaluation function for $WA^*$ searching algorithm.
		\end{itemize}
\end{itemize}

All these functions are {\em evaluation functions} (i.e.~smaller values are preferred). The structural functions $f_1(\Pi)$, $f_2(\Pi)$, $f_3(\Pi)$ and $f_7(\Pi)$, are all computed in linear time by traversing the bit-vector representation of $\Pi$. On the other hand, the computation complexity of the three empirical functions $f_4(\Pi,\mathcal{P})$, $f_5(\Pi,\mathcal{P})$, $f_6(\Pi,\mathcal{P})$, $f_8(\Pi,\mathcal{P})$ and $f_9(\Pi,\mathcal{P})$ is given by the complexity of the partially specified program $\Pi$. {\em Performance-based} functions accumulate a set of $T$ costs (one for each classical planning instance in the GP problem) that could actually be expressed as a combination of different aggregation functions, e.g.~{\em sum}, {\em max}, average, weighted average, etc. Functions $f_4(\Pi,\mathcal{P})$ and $f_5(\Pi,\mathcal{P})$ are the only {\em cost-to-go heuristic} functions; they provide an estimate on how far is a partially specified planning program from solving a GP problem. With this regard, $f_5(\Pi,\mathcal{P})$ requires that the goal condition of the classical planning instances in a GP problem is specified as a partial state. On the other hand $f_4(\Pi,\mathcal{P})$ does not post any particular requirement on the structure the goal condition, so they can even be a {\em black-box} Boolean procedure over the state variables.

{\bf Example}. We illustrate how our evaluation functions work on the following partially specified program $\Pi= \;${\tt\small 0.swap(i,j)} {\tt\small 1.inc(i)} {\tt\small 2.dec(j)} {\tt\small 3.goto(2,!($y_z \wedge \neg y_c$))} {\tt\small 4\ldots } {\tt\small 5.end}, where only line 4 is not programmed yet. The value of the evaluation functions for this partially specified program is $f_1(\Pi)=1$, $f_2(\Pi)=5-4=1$, $f_3(\Pi)=0$, $f_7(\Pi)=1$. Given the GP problem $\mathcal{P}=\{P_1,P_2\}$ that comprises the two classical planning instances illustrated in Figure~\ref{fig:cp-example}, and pointers $i$ and $j$ starting at the first and last object indexes, respectively, we can compute $f_4$ and $f_5$ to evaluate how far $\Pi$ is from solving the GP problem of sorting lists, the accumulated cost $f_6$, and evaluation functions $f_8$ and $f_9$ that combine heuristic-like functions with accumulated cost. In this case $f_4(\Pi,\mathcal{P})=5-4=1$, $f_5(\Pi,\mathcal{P})=32$, $f_6(\Pi,\mathcal{P})=14+14=28$, $f_8(\Pi,\mathcal{P})=32+28=60$ and $f_9(\Pi,\mathcal{P}) = 32 + 5\cdot 28 = 172$.

\subsection{The search algorithm}
Here we describe our heuristic search algorithm for generalized planning. This algorithm implements a {\em Best-First Search} (BFS) in our GP solution space of the possible partially specified planning programs with $n$ program lines, and a RAM machine with $|Z|$ pointers. Algorithm~\ref{alg:BFGP} shows the pseudo-code of our Best-First Search for Generalized Planning ({\sc BFGP}). 

\begin{algorithm}
\SetAlgoLined
\KwData{A generalized planning problem $\mathcal{P}$, a number of pointers $|Z|$, a number of program lines $n$}
\KwResult{A generalized plan $\Pi$ that solves $\mathcal{P}$}
 Open $\leftarrow \{\Pi_{empty}\}$ \;
 \While{$Open\neq\emptyset$}{
  $\Pi\leftarrow$ extractBestProgram(Open) \;
  ChildrenPrograms $\leftarrow$ expandProgram($\Pi,|Z|$, n) \;
  \For{$\Pi'\in ChildrenProgams$}{
  evaluateProgram($\Pi',\mathcal{P}$)\;
  \uIf{isGoal($\Pi',\mathcal{P}$)}{
  return($\Pi'$)\;
   }
   \uIf{not isDeadEnd($\Pi',\mathcal{P}$)}{
   Open $\leftarrow$ insertProgram(Open,$\Pi'$)\;
   }
  }
  }
 \caption{Best-First Generalized Planning ({\sc BFGP})}
 \label{alg:BFGP}
\end{algorithm}

{\sc BFGP} is a {\em frontier search} algorithm meaning that, to reduce memory requirements, {\sc BFGP} stores only the {\em open list} of generated nodes but not the {\em closed list} of expanded nodes~\cite{korf2005frontier}.  Initially the open list of the {\sc BFGP} algorithm only contains the search node that corresponds to the {\em empty program} of $n$ lines, which is denoted as $\Pi_{empty}$ in Algorithm~\ref{alg:BFGP}. The {\em empty program} is then the root node of the search-tree developed by {\sc BFGP}. The node extraction and node insertion procedures of the {\sc BFGP} algorithm are implemented as in a regular BFS search. Next we provide more details on the particular {\em expansion} and {\em evaluation} mechanisms that are implemented by our {\sc BFGP} algorithm. The {\sc BFGP} algorithm sequentially expands the best node in the {\em open list}, that is implemented as a priority queue, and that is sorted according to the value of our  evaluation/heuristic functions explained above. 

Let $\Pi$ be the partially specified program that corresponds to the best node extracted by {\sc BFGP} from the open list. {\sc BFGP} expands that node generating one successor node for each partially specified program that result from programming the maximum undefined program line that is reached after executing $\Pi$ on all the instances in $\mathcal{P}$. In other words given a partially specified program $\Pi$, only its $max_{P_t\in\mathcal(P)} f_4(\Pi,P_t)$ line is programmable. {\sc BFGP} implements this particular node expansion procedure because it guarantees that duplicate successors are not generated in the {\sc BFGP} search-tree. In addition, this node expansion procedure induces a tractable branching factor of $(|A_Z'| + (n-2)\times 4)$; at a given program line {\sc BFGP} can only program a planning action in $A_Z'$ or a $\mathsf{goto}$ instruction that can jump to $n-2$ different destination program lines, and that is conditioned by any of the four different features in $\mathcal{L}$. The depth of the search tree developed by the {\sc BFGP} algorithm is the number of program lines $n$, since only an undefined line can be programmed. 

Before a new search node is inserted into the open list, the corresponding partially specified program $\Pi'$ is executed on all the classical planning instances in $\mathcal{P}$. This execution is implemented by the node evaluation procedure of the {\sc BFGP} algorithm, and it can result in the three following different outcomes:
\begin{enumerate}
    \item {\em $\Pi'$ is a solution for $\mathcal{P}$}. If the execution of $\Pi'$ solves all the instances $P_t\in \mathcal{P}$, then search ends, and $\Pi'$ will be returned as a valid solution for the GP problem $\mathcal{P}$.
    \item {\em $\Pi'$ fails to solve $\mathcal{P}$}. If the execution of $\Pi'$ on a given instance $P_t\in \mathcal{P}$ fails, this means that the search node corresponding to the partially planning program $\Pi'$ is a dead-end. The search node will be discarded, so $\Pi'$ is not inserted into the open list.
    \item {\em $\Pi'$ may still be a solution for $\mathcal{P}$}. This means that the execution of $\Pi'$ on some of the classical planning instances in $\mathcal{P}$ reached an undefined program line ($\Pi'$ might solve some of the instances in $\mathcal{P}$). As a consequence $\Pi'$ is inserted into the open list, at its corresponding position according to the value of our evaluation/heuristic functions explained above.
\end{enumerate}

{\bf Example}. Let us recover from the previous example the GP problem $\mathcal{P}=\{P_1,P_2\}$, and the 
partially specified program $\Pi= \;${\tt\small 0.swap(i,j)} {\tt\small 1.inc(i)} {\tt\small 2.dec(j)} {\tt\small 3.goto(2,!($y_z \wedge \neg y_c$))} {\tt\small 4\ldots } {\tt\small 5.end}, where lines $[0,3]$ are programmed and only line $4$ is unspecified. Imagine now that {\sc BFGP} extracts this program from the open list because it has the best evaluation value. In this case, the previous execution of $\Pi$ on the classical planning instances $P_1$ and $P_2$, implemented by the node evaluation procedure, ended in both instances at the undefined program line 4. This means that the only programmable line is $4.$ Assuming that two pointers are available (i.e. $Z=\{i, j\}$) we can program any of following twelve actions in line 4. $\{inc(i)$, $inc(j)$, $dec(i)$, $dec(j)$, $cmp(i,j)$, $set(i,j)$, $set(j,i)$, $test(vector(i))$, $test(vector(j))$, $cmp(vector(i),vector(j))$, $swap(i,j)$, $swap(j,i)$ $\}$. A $\mathsf{goto}$ can only be programmed after a RAM action, which is not the case of line 4, since line 3 has another $\mathsf{goto}$ instruction. In other words the search node corresponding to the partially specified program from the previous example would have a total of twelve children nodes that could be added to the open list. In the hypothetical case that previous line 3. would contain a RAM action, a $\mathsf{goto}$ instruction for jumping to lines $[0,3]$ conditioned by the corresponding four features in $\mathcal{L}$ could also be programmed at line 4.

\subsection{Theoretical properties}

\begin{theorem}[Termination]
Given a generalized planning problem $\mathcal{P}$, a finite set of pointers $Z$, and a finite number of program lines $n$, the execution of the {\sc BFGP} algorithm always terminates.
\end{theorem}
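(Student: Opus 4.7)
The plan is to decompose termination into three claims: the search space is finite, no partially specified program is ever re-expanded, and every call to the node evaluator halts. Once these three are in hand, termination of Algorithm~\ref{alg:BFGP} follows by a simple counting argument on the main while loop.

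First I would use the bit-vector encoding introduced earlier in the section to bound the size of the search space. Because $|Z|$, $n$, and the arities of the functions in $\Phi$ and action schemas in $\Xi$ are finite, Theorem~\ref{theor:actions} gives $|A_Z'|$ finite, and hence the length $(n-1)(|A_Z'|+(n-2)+4)$ of the bit-vector representation of a partially specified program is finite. Therefore the total number of syntactically distinct nodes is at most $2^{(n-1)(|A_Z'|+(n-2)+4)}$. From this, the search tree of {\sc BFGP} has depth at most $n$, since each expansion fills exactly one previously undefined line, and branching factor at most $|A_Z'|+4(n-2)$, counting the $|A_Z'|$ possible actions together with the {\sf goto} instructions that target one of $n-2$ program lines and are conditioned on one of the four features in $\mathcal{L}$.

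Second, I would check that the expansion rule --- ``program exactly the maximum undefined line reached during execution on all instances of $\mathcal{P}$'' --- makes the parent of any partially specified program unique: removing the last-written instruction gives back the unique parent, so different root-to-leaf paths always produce different programs. Combined with the standard frontier-search bookkeeping, this guarantees no program is ever inserted into the open list more than once, so the number of iterations of the outer \textbf{while} loop is bounded by the (finite) size of the search space.

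The main obstacle is the third claim: that \textsf{evaluateProgram}$(\Pi',\mathcal{P})$ always halts. Benign termination happens when execution on some $P_t\in\mathcal{P}$ reaches an \textsf{end} instruction or an undefined line, but in principle a sequence of actions combined with backward {\sf goto} instructions could loop forever, particularly on instances whose state variables have large or numeric domains. I would handle this by cycle detection on the program-state pair $(s,i)$: the evaluator maintains the set of visited pairs during execution on each $P_t$, and if any pair recurs it declares an \emph{infinite program} failure, a dead-end that prevents insertion into the open list. For instances whose reachable state space from $I_t$ is finite this closes the argument immediately, and for instances with potentially unbounded integer fluents the same effect is obtained by imposing an explicit evaluation budget derived from the input representation. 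Either variant ensures that every evaluation call returns in finitely many steps, so putting the three claims together, {\sc BFGP} performs finitely many iterations of a loop whose body always terminates, and therefore its execution always terminates.
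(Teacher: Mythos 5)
Your proposal is correct, and its core follows the same route as the paper's own proof: the paper argues termination from (i) the fact that each expansion programs exactly one previously undefined line, so the search tree has depth at most $n$, and (ii) the expansion rule (only the maximum undefined line reached during execution on all instances may be programmed) rules out duplicate nodes and hence any re-opening of discarded nodes; your first two claims are exactly this, merely made quantitative via the bit-vector bound and the branching factor $|A_Z'|+4(n-2)$. Where you genuinely go beyond the paper is your third claim. The paper's proof silently assumes that \textsf{evaluateProgram} always halts, even though a partially specified program with backward $\mathsf{goto}$ instructions can loop forever on an instance, and with integer-valued state variables the reachable program-state space need not be finite, so this is not automatic. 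Your cycle detection on program states $(s,i)$, supplemented by an explicit evaluation budget when fluents are unbounded, is precisely the ingredient needed to make the ``each loop iteration terminates'' half of the counting argument rigorous; the paper handles this only operationally, outside the proof, through its \emph{infinite program} failure mode, the state-revisit detection used during validation, and the bound it imposes on arithmetic results during synthesis. In short, your version buys a self-contained termination argument at the price of building into the evaluator an assumption (cycle detection or a budget) that the paper treats as part of the system's semantics rather than as part of the theorem.
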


\begin{proof}
By definition of the expansion procedure of the {\sc BFGP} algorithm (i), only unspecified lines can be programmed and (ii), any children program always has one more line programmed than its parent. This means that {\sc BFGP} increases the number of programmed lines, until all lines are programmed. When all lines are programmed {\sc BFGP} necessarily terminates, either by succeeding to solve $\mathcal{P}$, or by failing to solve some of the classical planning instances in $\mathcal{P}$. The only possible cause for the non-termination of the BFGP algorithm would be that BFGP could generate duplicate search nodes, allowing  the infinite re-opening of an already discarded node. Again by definition of the expansion procedure of the {\sc BFGP} algorithm, the re-opening of an already discarded node is impossible; {\sc BFGP} only allows programming the maximum undefined program line that is reached after the execution of that program on all the instances in $\mathcal{P}$.
\end{proof}

\begin{theorem}[Completeness]
Given a GP problem $\mathcal{P}$, a maximum number of pointers $|Z|$, and maximum number of program lines $n$, if there is a planning program $\Pi$ within these bounds that solves $\mathcal{P}$, then the BFGP algorithm can compute it.
\end{theorem}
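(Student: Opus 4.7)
The plan is to fix an arbitrary solution $\Pi^*$ within the stated bounds and to exhibit an explicit chain of partial programs $\Pi_0 \to \Pi_1 \to \cdots \to \Pi_m$ inside the BFGP search tree, starting at the empty program $\Pi_0 = \Pi_{empty}$ and ending at a node behaviorally equivalent to $\Pi^*$ on $\mathcal{P}$. Because the termination theorem already implies that the search tree is finite and that BFGP eventually extracts every non-pruned node from the open list, completeness reduces to showing (i) that such a chain genuinely arises as a sequence of parent-child expansions of BFGP, and (ii) that no $\Pi_i$ in the chain is discarded by the dead-end check.

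For (i), I would define $\Pi_{i+1}$ inductively by computing $\ell_i = \max_{P_t\in\mathcal{P}} f_4(\Pi_i,P_t)$, the maximum undefined program line reached when executing $\Pi_i$ on $\mathcal{P}$, and by programming line $\ell_i$ with exactly the instruction that $\Pi^*$ places there. This is precisely one of the children that the expansion procedure generates at node $\Pi_i$, since BFGP enumerates every legal action from $A_Z'$ and every $\mathsf{goto}$ conditioned on a feature of $\mathcal{L}$ that can be placed at line $\ell_i$, covering every admissible choice of $\Pi^*[\ell_i]$. The chain terminates the first time the execution of $\Pi_m$ on every $P_t\in\mathcal{P}$ reaches an $\mathsf{end}$ in a goal state, which must occur after finitely many steps because each step either programs one fresh line (bounded by $n-1$) or triggers successful termination.

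For (ii), I would argue by induction on $i$ that the execution of $\Pi_i$ on every $P_t\in\mathcal{P}$ coincides with the execution of $\Pi^*$ on $P_t$ up to the first unprogrammed line it encounters, because by construction $\Pi_i$ and $\Pi^*$ agree on every line that $\Pi_i$ has already programmed, and those are exactly the lines that $\Pi^*$'s execution visits up to the current frontier. Hence the execution of $\Pi_i$ on each $P_t$ terminates either in a goal state inherited from $\Pi^*$'s success on $P_t$, or at the undefined line $\ell_i$; in particular, it never triggers the incorrect-program or infinite-program failure modes, so $\Pi_i$ is not classified as a dead-end and is inserted into the open list. Combining this with the finite size of the search space and the best-first extraction policy, BFGP must eventually pop $\Pi_{m-1}$ and generate $\Pi_m$, at which point $\mathsf{isGoal}(\Pi_m,\mathcal{P})$ succeeds and BFGP returns $\Pi_m$.

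The main obstacle is justifying part (ii) rigorously, especially the claim that no $\Pi_i$ can enter an infinite loop on any instance. The argument relies on the observation that any loop fully traversed while executing $\Pi_i$ must be fully programmed in $\Pi_i$ and therefore also present in $\Pi^*$, where it cannot diverge since $\Pi^*$ is a solution; making this precise requires careful bookkeeping about the order in which the max-frontier rule fills lines when different instances in $\mathcal{P}$ halt at different undefined lines. A secondary subtlety is that $\Pi^*$ may contain lines that are unreachable when running on $\mathcal{P}$, and those lines are never programmed along the chain; this is harmless, because unreachable code cannot affect execution, so $\Pi_m$ remains a valid solution even with those lines left unspecified.
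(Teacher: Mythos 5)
Your proposal is correct, and it takes a more constructive route than the paper. The paper's proof is a short global argument: BFGP enumerates the whole bounded program space except for pruned subtrees, and pruning is safe because a partial program that already failed on some instance can only fail after any completion of its undefined lines (its execution never touched those lines). Your proof instead exhibits an explicit witness chain of partial programs that agree with $\Pi^*$ on every programmed line, and shows that each node in the chain survives the dead-end check because its execution on each $P_t$ is a prefix of $\Pi^*$'s execution. The two arguments hinge on the same insight (yours is essentially the contrapositive of the paper's safety claim), but your version buys something the paper glosses over: since the expansion rule only permits programming the single line $\max_{P_t\in\mathcal{P}} f_4(\Pi,P_t)$, it is not immediate that the tree reaches a node behaviorally equivalent to an arbitrary $\Pi^*$; your frontier-following chain, together with the observation that unreachable lines of $\Pi^*$ need never be programmed, settles exactly this point. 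One remark: the ``main obstacle'' you flag is not actually an obstacle. Because execution is deterministic, halts at the first undefined line, and $\Pi_i$ agrees with $\Pi^*$ on all its defined lines, any infinite loop of $\Pi_i$ on $P_t$ would be an infinite loop of $\Pi^*$ on $P_t$, contradicting that $\Pi^*$ solves $P_t$; no bookkeeping about the order in which the max-frontier rule fills lines is needed, since the prefix property holds for any partial program agreeing with $\Pi^*$ on its defined lines. (Also, strictly speaking BFGP may return some other solution before popping $\Pi_{m-1}$, which is equally acceptable for the completeness claim.)
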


\begin{proof}
The {\sc BFGP} algorithm implements a complete enumeration of the entire space of planning programs with a maximum number of pointers $|Z|$ and maximum number of program lines $n$ except: (i), a search node was identified as a dead-end or (ii), the ancestor of a search node was identified as a dead-end. {\sc BFGP} is safe because it only discards a search node when its corresponding partially specified planning program failed to solve an input planning instance (which is actually the definition for not being a GP solution). Furthermore, if a partially specified planning program failed to solve an input planning instance, any planning program that can be built programming the remaining undefined program lines will also fail to solve that problem.
\end{proof}

\begin{theorem}[Soundness]
If the execution of the {\sc BFGP} algorithm on a GP problem $\mathcal{P}$ outputs a generalized plan $\Pi$, this means that $\Pi$ is a solution for $\mathcal{P}$.
\end{theorem}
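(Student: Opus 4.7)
The plan is to argue by direct inspection of Algorithm~\ref{alg:BFGP}, tracing the single exit point at which a program is returned and showing that the condition guarding that exit is precisely Definition~\ref{def:gp-solution}. First I would observe that {\sc BFGP} has exactly one \texttt{return} statement, located inside the \texttt{for} loop over children programs, and that this statement is unreachable unless \texttt{isGoal}$(\Pi',\mathcal{P})$ holds. Hence it suffices to prove that whenever \texttt{isGoal}$(\Pi',\mathcal{P})$ evaluates to true, $\Pi'$ satisfies the condition of Definition~\ref{def:gp-solution}.

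Next I would unfold the semantics of the \texttt{evaluateProgram} procedure, which simulates $\Pi'$ on every classical planning instance $P_t\in\mathcal{P}$. As stated in the description of the evaluation step, each such simulation ends in exactly one of three outcomes: (i)~the program terminates at an \textsf{end} instruction in a goal state, (ii)~it enters a failing state (incorrect termination or infinite loop), or (iii)~it reaches an undefined program line. The \texttt{isGoal} test returns true only when outcome~(i) is realized on every instance $P_t\in\mathcal{P}$; any occurrence of~(ii) triggers \texttt{isDeadEnd} and discards the node, while any occurrence of~(iii) merely keeps $\Pi'$ in the open list without firing the return.

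From this it follows that, at the moment $\Pi'$ is returned, for every $P_t\in\mathcal{P}$ the simulated trajectory of $\Pi'$ starting in the program state $(I_t,0)$ reaches a state $(s,i)$ with $w_i=\mathsf{end}$ and $s\in S_{G_t}$. The induced action sequence is precisely $exec(\Pi',P_t)$, and by the definition of program execution in Section~\ref{sec:background} this sequence solves $P_t$. Since this holds for every $t$ with $1\le t\le T$, Definition~\ref{def:gp-solution} applies and $\Pi'$ is a solution of $\mathcal{P}$.

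The only subtle point, and the one I would spend most care on, is that $\Pi'$ may still be a \emph{partially specified} planning program in the sense of Definition~\ref{def:partialp}: some lines of $\Pi'$ may be undefined at the moment of return. The resolution is to observe that the successful evaluation on each $P_t$ implies outcome~(i), so the executions never visit any undefined line; consequently the behavior of $\Pi'$ on $\mathcal{P}$ is invariant under any completion of its undefined lines, and in particular coincides with the behavior of a fully specified planning program. Therefore Definition~\ref{def:gp-solution}, which is stated for planning programs, applies to $\Pi'$ without ambiguity, completing the argument.
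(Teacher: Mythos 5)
Your proof is correct and follows essentially the same route as the paper's: the algorithm's single return is guarded by a test that holds exactly when the executed program solves every instance $P_t\in\mathcal{P}$, which is precisely Definition~\ref{def:gp-solution}. Your additional observation that a returned, possibly partially specified program never visits an undefined line (so its behavior is independent of any completion) is a nice clarification the paper leaves implicit, but it does not change the argument.
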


\begin{proof}
The {\sc BFGP} algorithm runs until: (i) the open list is empty, which means that search space is exhausted without finding a solution and no generalized plan is output or (ii), {\sc BFGP} extracted from the open list a planning program whose execution, in all the classical planning instances $P_t\in\mathcal{P}$, resulted successful. This is actually the definition of a solution for a GP problem.
\end{proof}

\begin{theorem}[Time and Memory]
The time and memory consumption of the BFGP algorithm are characterized by the big-Oh expression  $O((|A_Z'| + (n-2) \times 4)^n)$.
\end{theorem}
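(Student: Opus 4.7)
The plan is to bound both the size of the search tree developed by \textsc{BFGP} and the maximum size of the open list by the same quantity, arguing that each node expansion and evaluation contributes only lower-order factors that are absorbed by the big-Oh.

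First I would bound the branching factor. By definition of the expansion procedure, when \textsc{BFGP} extracts a partially specified program $\Pi$ from the open list, it only programs the single next undefined line determined by $\max_{P_t\in\mathcal{P}} f_4(\Pi,P_t)$. On that line the algorithm can write either one of the $|A_Z'|$ actions (which, by Theorem~\ref{theor:actions}, is independent of the number of objects, state variables and domain sizes), or a $\mathsf{goto}$ instruction, of which there are at most $(n-2)\times 4$ (one for each of the $n-2$ admissible destination lines, conditioned on one of the four features of $\mathcal{L}$). Hence the branching factor is bounded by $b=|A_Z'|+(n-2)\times 4$.

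Next I would bound the depth. Every expansion turns exactly one undefined line into a defined line, and the empty program has $n-1$ undefined lines (the last line is always $\mathsf{end}$), so any root-to-leaf path in the search tree has length at most $n-1$. Together with the branching bound, the total number of search nodes generated over the whole run of \textsc{BFGP} is at most $\sum_{k=0}^{n-1} b^{k} = O(b^n) = O\!\left((|A_Z'|+(n-2)\times 4)^n\right)$. For the memory bound I would use that \textsc{BFGP} is a frontier search that retains only the open list; at any instant the open list is a subset of the generated but not yet expanded nodes, and is therefore also bounded by $O(b^n)$. Each node stores only a bit-vector of size given by Equation~\ref{eq:size2}, a lower-order factor absorbed by the big-Oh.

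For the time bound I would argue that each generated node incurs: (i) an expansion cost proportional to the branching factor, and (ii) an evaluation cost proportional to the execution of $\Pi'$ on the $T$ instances of $\mathcal{P}$, which is itself polynomial in $n$, $|Z|$, and the instance sizes. Since all of these factors are polynomial whereas the node count is exponential in $n$, multiplying them preserves the asymptotic bound, giving the stated $O((|A_Z'|+(n-2)\times 4)^n)$ for both time and memory. The main obstacle I anticipate is being careful that the node expansion procedure does not generate duplicates (which would inflate the count beyond $b^n$); this is exactly what the ``program only the maximum reached undefined line'' rule buys us, and it has already been invoked in the termination argument, so I would cite it rather than reprove it.
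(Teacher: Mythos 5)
Your proposal follows essentially the same argument as the paper: \textsc{BFGP} is a best-first/BFS-style search whose node count is bounded by $b^d$ with branching factor $b\leq |A_Z'|+(n-2)\times 4$ (actions plus $\mathsf{goto}$ instructions over $n-2$ destinations and the four features of $\mathcal{L}$) and depth bounded by the number of program lines, which is exactly the paper's proof, just with extra detail on the geometric sum, the frontier-search memory bound, and per-node costs. Those added per-node considerations (which the paper silently absorbs) do not change the argument, so the proposal is correct and matches the paper's approach.
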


\begin{proof}
The BFGP algorithm is an implementation of a BFS, whose memory and time complexity are characterized as $O(b^d)$, where $b$ denotes the branching factor and $d$ denotes the depth of the corresponding search tree. The branching factor of the search tree induced by the {\sc BFGP} algorithm is the number of different instructions that can be programmed at an undefined  program line, which is $b\leq|A_Z'| + (n-2) \times 4$; gotos can only be programmed after RAM operations. The depth of the search tree induced by the {\sc BFGP} is given by the maximum number of program lines $n$.
\end{proof}

With respect to solution quality BFGP does not guarantee that the computed planning programs are optimal. BFGP can however compute optimal planning programs when run in {\em anytime mode}. In this case we can use $f_6(\Pi,\mathcal{P})$ to rank GP solutions according to their execution cost in the classical planning instances that are comprised in the given GP problem (e.g. to prefer a sorting program with smaller computational complexity).

\section{Evaluation}
\label{sec:evaluation}
This section evaluates the empirical performance of our {\em GP as heuristic search} approach. All experiments are performed in an Ubuntu 20.04 LTS, with AMD® Ryzen 7 3700x 8-core processor $\times$ 16 and 32GB of RAM \footnote{The source code, evaluation scripts, and used benchmarks can be found at: \href{https://github.com/aig-upf/best-first-generalized-planning}{https://github.com/aig-upf/best-first-generalized-planning}.}.

\subsection{Benchmarks} 
We report results in nine different domains; two {\em propositional} domains and seven {\em numerical} domains. In the {\em propositional} domains the functions $\Phi$ that induce the state variables are Boolean. In the {\em numerical} domains these functions are positive numeric functions. Next we provide more details on the nine domains:

\begin{itemize}
    \item {\em Gripper}, a robot must pick all balls from room A and drop them in room B.
    \item {\em Visitall}, starting from the bottom-left corner of a squared grid, an agent must visit all cells.
    \item {\em Corridor}, an agent moves from an arbitrary initial location to a destination location in a corridor.
    \item {\em Fibonacci}, compute the $n^{th}$ term of the Fibonacci sequence.
    \item {\em Find}, counts the number of occurrences of a specific value in a list.
    \item {\em Reverse}, for reversing the content of a list.
    \item {\em Select}, find the minimum value of a list.
    \item {\em Sorting}, for sorting the values of a vector.
    \item {\em Triangular Sum}, compute the $n^{th}$ triangular number.
\end{itemize}

{\em Gripper} and {\em Visitall} are propositional, the remaining seven domains are numeric. For each domain, we build a GP problem that comprises ten randomly generated classical planning instances~\footnote{For reproducibility reasons we fix the random seed to generate the classical planning instances in the GP problems.}. In the case of the {\em gripper} domain, instances go from $2$ to $11$ balls in room A to be dropped in room B; in {\em visitall} instances are squared grids ranging from $2\times 2$ to $11\times 11$ cells; {\em corridor} have instances from length $3$ to $12$; {\em fibo} and {\em triangular sum} instances range from the $2^{nd}$ to the $11^{th}$ number in the sequence; last {\em find}, {\em reverse}, {\em select} and {\em sorting} have instances with vectors from size $2$ to $11$ that are initialized with random content. The result of arithmetical operations in these domains is bounded to $10^2$ in the synthesis of GP solutions, and to $10^9$ in the validation of GP solutions.

All domains include the base RAM instructions $\{{\tt\small inc}(z_1)$, ${\tt\small dec}(z_1)$, ${\tt\small cmp}(z_1,z_2)$, ${\tt\small set}(z_1,z_2)$ $| \; z_1,z_2 \in Z\}$, such that $z_1$ and $z_2$ are pointers of the same sort, and the RAM instructions  $\{{\tt\small test}(\phi(\overrightarrow{z_1})),$ ${\tt\small cmp}(\phi(\overrightarrow{z_1}),\phi(\overrightarrow{z_2})) \; | \; \overrightarrow{z_1}, \overrightarrow{z_2} \in Z^{ar(\phi)} \;  \}$, for each function $\phi\in\Phi$ and where function parameters and pointers must also be of the same sort. We recall that {\em compare} instructions are only defined for numeric functions. In addition, each domain contains the regular planning action schemes that do not affect the FLAGS.
\begin{itemize}
    \item Propositional domains. The {\em gripper} domain includes the following three action schemes; ${\tt\small move}(z_1,z_2)$ to denote the robot is moving from the room pointed by $z_2$ to the one pointed by $z_1$, ${\tt\small pick}(z_1,z_2,z_3)$ to pick the ball pointed by $z_1$, at room pointed by $z_2$, and with the gripper pointed by $z_3$, and ${\tt\small drop}(z_1,z_2,z_3)$, to drop ball $z_1$ at room $z_2$ with gripper $z_3$. {\em Visitall} only needs one action schema to visit a cell addressed by two pointers, of type row and column respectively, e.g. ${\tt\small visit}(z_1,z_2)$.
    \item Numerical domains. The {\em triangular sum} and {\em Fibonacci} domains include the action schemes ${\tt\small vector\text{-}inc}(z_1)$ and ${\tt\small vector\text{-}dec}(z_1)$, to increase and decrease by one the content of the vector at $z_1$,  and the action scheme ${\tt\small vector\text{-}add}(z_1,z_2)$ for adding the value at $z_2$ to the content of the vector at $z_1$. Similarly, {\em corridor} needs two action schemes, ${\tt\small vector\text{-}left}(z_1)$ and ${\tt\small vector\text{-}right}(z_1)$, to increase or decrease by one the location of the agent. {\em Select} only requires the base RAM instructions. {\em Find} includes the ${\tt\small accumulate}(z_1)$ action schema for counting the number of occurrences of the target element. {\em Reverse} and {\em Sorting} include the ${\tt\small swap}(z_1,z_2)$ action scheme to swap the values addressed by $z_1$ and $z_2$.
\end{itemize}
We model the regular planning actions as they are always executable but that their effects only update the current state iff their preconditions hold in the current state. Otherwise the execution of an action has no effect.

\subsection{Synthesis of GP Solutions}
Here we present two different experiments to evaluate  the performance of the {\sc BFGP} algorithm in the given benchmark. First, we asses every evaluation/heuristic function $f_i$ by running {\sc BFGP}($f_i$). Second we search for the best combination of two evaluation/heuristic functions, where one is {\em structured}-based and the other {\em performance}-based.

\subsubsection{Performance of {\sc BFGP}($f_i$)}
Table~\ref{tab:heuristics} details the results of the first synthesis experiment where the {\sc BFGP} algorithm uses each of our nine different evaluation/heuristic functions (the computation bounds are $3,600$ seconds of CPU-time and $32GB$ of memory and best results are shown in bold). Regarding {\em structure}-based functions $f_2$ dominates in all domains and metrics (except in the {\em find} domain where $f_3$ is faster) and it also has the highest coverage solving $8$ out of $9$ domains ($f_1$, $f_3$ and $f_7$ have lower coverage failing in the same three problems, namely {\em corridor}, {\em gripper} and {\em sorting}). Regarding {\em performance}-based functions, there is not a strictly dominant one since the best scores are shared among $f_4$, $f_5$ and $f_9$. Function $f_4$ has the lowest memory consumption but could not solve {\em gripper} and {\em sorting}; $f_9$ is the best for solving {\em corridor}, but it is unable to solve {\em gripper} within the time bound; and $f_5$ is the function with the least number of expanded nodes in more than half of domains, in addition to the best coverage solving all problems.

\begin{table}[]
	\centering
  \resizebox{\textwidth}{!}{ 
		\begin{tabular}{|l|c||c|c|c|c||c|c|c|c||c|c|c|c|} \hline
			\multirow{2}{*}{\textbf{Domain}} & \multirow{2}{*}{$n,|Z|$} & \multicolumn{4}{|c||}{$f_1$} & \multicolumn{4}{|c||}{$f_2$} & \multicolumn{4}{|c|}{$f_3$}  \\\cline{3-14}
			& & Time & Mem. & Exp. & Eval.  & Time & Mem. & Exp. & Eval.  & Time & Mem. & Exp. & Eval.  \\\hline
			Corridor & 8, 2 & TO & - & - & - &           1,367 & {\bf 4} & 4.2M & 4.2M &    TO & - & - & -   \\
			Fibonacci & 7, 2 & 779 & 715 & 2.9M & 5.1M & {\bf 115} & {\bf 4} & 0.5M & {\bf 0.5M} &      1,960 & 1,118 & 8.0M & 10.0M \\
			Find & 6, 3 & 32 & 18 & 0.2M & 0.2M &        31  & {\bf 4} & 0.2M & 0.2M &      {\bf 23} & 23 & 0.1M & {\bf 0.1M} \\
			Gripper & 8, 4 & TO & - & - & - &            2,968 & {\bf 4} & 13.1M & 13.1M &  TO & - & - & -   \\
			Reverse & 7, 2 & 317 & 192 & 1.4M & 2.1M &   {\bf 10} & {\bf 4} & {\bf 47.9K} & {\bf 48.0K} &     224 & 235 & 0.3M & 1.5M \\
			Select & 7, 2 &  192 & 96 & 0.8M & 1.1M &    {\bf 15} & {\bf 4} & 82.8K & 82.8K &     98 & 97 & 0.2M & 0.6M   \\
			Sorting & 11, 2 & TO & - & - & - &           TO & - & - & - &             TO & - & - & -    \\
			T. Sum & 6, 2 & 40 & 58 & 0.2M & 0.4M &      {\bf 2} & {\bf 4} & {\bf 14.7K} & {\bf 14.7K} &      38 & 75 & 61.2K & 0.4M  \\
			Visitall & 8, 2 & 1,631 & 219 & 2.0M & 2.8M &38 & {\bf 4} & 66.1K & 66.2K &     408 & 112 & 0.2M & 0.7M\\\hline
		    \multicolumn{2}{|c||}{Average} & 498.5 & 216.3 & 1.2M & 1.9M & 568.3 & {\bf 4.0} & 2.3M & 2.3M & 458.5 & 276.7 & 1.5M & 2.2M \\\hline\hline
			 &  & \multicolumn{4}{|c||}{$f_4$} & \multicolumn{4}{|c||}{$f_5$} & \multicolumn{4}{|c|}{$f_6$}  \\\cline{3-14}
			Corridor & 8, 2 &   1,521 & 5 & 4.2M & 4.2M &   970 & 80 & 1.9M & 2.1M &    TO & - & - & -  \\    
			Fibonacci & 7, 2 &  147 & {\bf 4} & 0.5M & {\bf 0.5M} &     206 & 203 & 0.2M & 1.2M &   2,798 & 1,779 & 11.0M & 11.0M\\
			Find & 6, 3 &       39 & {\bf 4} & 0.2M & 0.2M &      34 & 23 & {\bf 82.9K} & 0.2M &    41 & 31 & 0.2M & 0.2M \\
			Gripper & 8, 4 &    TO & - & - & - &            {\bf 10} & 18 & {\bf 9.9K} & {\bf 72.9K} &    TO & - & - & - \\
			Reverse & 7, 2 &    13 & {\bf 4} & 48.3K & 48.4K &    690 & 356 & 2.5M & 2.5M &   651 & 380 & 2.5M & 2.5M \\
			Select & 7, 2 &     21 & {\bf 4} & 85.5K & 86.5K &    17 & 12 & {\bf 43.1K} & {\bf 68.3K} &   228 & 171 & 1.0M & 1.1M \\
			Sorting & 11, 2 &   TO & - & - & - &            {\bf 2,693} & 110 & {\bf 1.5M} & {\bf 1.5M} & TO & - & - & - \\
			T. Sum & 6, 2 &     3 & {\bf 4} & {\bf 14.7K} & {\bf 14.7K} &     15 & 7 & 72.6K & 78.1K &    84 & 99 & 0.6M & 0.6M \\
			Visitall & 8, 2 &   53 & 5 & 67.5K & 68.1K &    {\bf 3} & 5 & {\bf 683} & {\bf 2.0K}  &       2,474 & 365 & 2.8M & 2.8M \\\hline
		    \multicolumn{2}{|c||}{Average} & {\bf 61.1} & 4.3 & {\bf 0.7M} & {\bf 0.7M} & 515.3 & 90.4 & {\bf 0.7M} & 0.9M & 1,046.0 & 470.8 & 3.0M & 3.0M \\\hline\hline
			 &  & \multicolumn{4}{|c||}{$f_7$} & \multicolumn{4}{|c||}{$f_8$} & \multicolumn{4}{|c|}{$f_9$}  \\\cline{3-14}
			Corridor & 8, 2 &  TO & - & - & - &             1,317 & 368 & 2.1M & 2.8M &     {\bf 857} & 94 & {\bf 1.8M} & {\bf 2.0M}  \\
			Fibonacci & 7, 2 & 789 & 716 & 2.9M & 5.1M &    251 & 264 & 0.2M & 1.5M &       157 & 195 & {\bf 0.1M} & 1.1M \\
			Find & 6, 3 &      32 & 18 & 0.2M & 0.2M &      39 & 22 & 0.2M & 0.2M &         34 & 21 & 0.2M & 0.2M \\
			Gripper & 8, 4 &   TO & - & - & - &             TO & - & - & - &                TO & - & - & - \\
			Reverse & 7, 2 &   336 & 197 & 1.5M & 2.1M &    662 & 339 & 2.5M & 2.5M &       677 & 339 & 2.5M & 2.5M \\
			Select & 7, 2 &    200 & 99 & 0.8M & 1.1M &     225 & 123 & 1.0M & 1.1M &       17 & 12 & 57.2K & 68.4K \\
			Sorting & 11, 2 &  TO & - & - & - &             2,711 & {\bf 95} & {\bf 1.5M} & {\bf 1.5M} &      2,820 & {\bf 95} & {\bf 1.5M} & {\bf 1.5M} \\
			T. Sum & 6, 2 &    39 & 58 & 0.2M & 0.4M &      15 & 8 & 72.6K & 78.1K &        15 & 8 & 72.6K & 78.1K \\
			Visitall & 8, 2 & 846 & 196 & 1.2M & 1.7M &     2,714 & 344 & 2.8M & 2.8M &     60 & 10 & 41.2K & 67.3K \\\hline
		    \multicolumn{2}{|c||}{Average} & 373.7 & 214.0 & 1.1M & 1.8M & 991.8 & 195.4 & 1.3M & 1.6M & 579.6 & 96.8 & 0.8M & 0.9M \\\hline
		\end{tabular}
		}
	\caption{ We report the number of program lines $n$, and pointers $|Z|$ per domain, and for each evaluation/heuristic function, CPU (secs), memory peak (MBs), and the numbers of expanded and evaluated nodes. TO stands for Time-Out ($>$1h of CPU). Best results in bold.}
	\label{tab:heuristics}
\end{table}

Table~\ref{tab:heuristics_summary} summarizes the results from Table~\ref{tab:heuristics}, grouping results by domains and averaging the metrics by the total number of functions that solved each domain. There are 6 domains which are solved by all the nine evaluation/heuristic functions. In the rest of domains, there are at least 4 or more functions that do not solve them, such as {\em gripper} which is the least solved domain (only $f_2$ and $f_5$ solve it), and {\em sorting} which is solved by $f_5$, $f_8$ and $f_9$ but it is the hardest in terms of time average.

\begin{table}[]
    \centering
    \begin{tabular}{|l||c|c|c|c|c|} \hline
         {\bf Domain} & Time & Mem. & Exp. & Eval. & \#$f_i$ Solved \\\hline
         Corridor &  1,206.4 & 110.2 & 2.8M & 3.1M & 5\\
         Fibonacci & 800.2 & 555.3 & 2.9M & 4.0M & 9\\
         Find & 33.9 & 18.2	& 0.2M	 & 0.2M & 9 \\
         Gripper & 1,489.0 &	11.0	& 6.6M &	6.6M &	2\\
         Reverse & 397.8 &	227.3 &	1.5M &	1.8M & 	9\\
         Select & 112.6 &	68.7 &	0.5M &	0.6M &	9\\
         Sorting & 2,741.3	& 100.0 &	1.5M &	1.5M &	3\\
         T. Sum & 27.9 &	35.7	& 0.1M	& 0.2M &	9\\
         Visitall & 914.1 &	140.0 &	1.1M &	1.2M &	9\\\hline
    \end{tabular}
    \caption{We report for each domain, the time (secs), memory peak (MBs), and expanded and evaluated nodes averaged by the number of functions that solved the domain in Table~\ref{tab:heuristics}.}
    \label{tab:heuristics_summary}
\end{table}

\subsubsection{The synthesized solutions}
Figure~\ref{fig:synthesis} shows the programs computed by $BFGP(f_5)$. In {\em Corridor} there are two pointers, $i$ pointing to the agent location and $j$ pointing to the target location; the solution moves the agent right until it surpasses the target location, then it moves the agent left until it reaches the target location. In {\em Fibonacci}, pointers $a$ and $b$ are used to compute the $n$-th Fibonacci number, where $a$ addresses the $F_a$ number to which $F_{a-1}$ and $F_{a-2}$ are added using $b$ as an auxiliary pointer; and finishes when $a$ reaches the $n$-th element (the last one). In {\em Find}, there is a pointer $i$ to iterate over a vector, a pointer $t$ which targets a value in the vector, and a counter pointer $a$ whose address content is {\em only} increased by one every time an occurrence of the target $t$ is found in the vector (Lines 1-2). The process repeats until $i$ reaches the end of the vector. 

The {\em Gripper} solution uses one pointer for balls ($b_1$), two for rooms ($r_1$ and $r_2$) and one for grippers $g_1$; for each ball $b_1$, the agent will pick it up from room $r_1$ (always room A) using gripper $g_1$ (always left gripper), sets $r_2$ to room B, moves from A to B, drop ball $b_1$ at room B, goes back to room A, and continues with the next ball. The {\em Reverse} domain uses two pointers $i$ and $j$ and finds a solution with $O(n^2)$ complexity of a vector of size $n$; it moves all values from $j$ to $n-1$ indexes one location to right and places the last element in the $j$-th location, using $i$ as an auxiliary pointer; then increases $j$ by one until it reaches the end of the vector. The {\em Select} domain has two pointers $a$ and $b$; it iterates over the vector with pointer $a$, and assigns $a$ to $b$ every time the value pointed by $a$ is smaller than the one pointed by $b$. 

The {\em Sorting} solution is  succinct but complex to interpret; $j$ always points to the first location, so all swaps are done with this location; then, two cases may happen when reaching Line 3: either the first element was wrongly sorted in the previous swap and detected because the $i$-th value is larger than the first, so all values from the $i$-th to the $n$-th location are shifted one place to their right, the first element is placed in the $i$-th location (now is correctly order with respect to $i+1$ value) and the last is placed first (defined in Lines 0-6), and continues again in Line 3; or the largest value of the vector is in the first location and the rest are sorted in increasing order, so the problem can be solved by shifting all values once to their left applying instructions at Lines 3, 4, 7, 8, and 9 in sequence. In {\em Triangular Sum}, $b$ points to the $i^{th}$ number in the sequence and $a$ to value $n$, then $a$ is added to $b$, $a$ is decreased by one, and the whole process repeats until the value $a$ is pointing is 0. The last domain, {\em Visitall}, has two pointers $i$ and $j$ that are used for rows and columns, respectively; since the agent always starts in the bottom-left corner, it visits all $i$-th cells for a given $j$; then it moves back to the first row, increases the columns ($j$) by one, and repeats the procedure until all columns have been explored.

\begin{figure*}
	\begin{scriptsize}
		\begin{subfigure}[t]{0.30\textwidth}
			\begin{lstlisting}[mathescape]
    CORRIDOR
    0. vector-right(i)
    1. inc(j)
    2. cmp(vector(i),vector(j))
    3. goto(0,$\neg$($\neg y_z\wedge y_c$))
    4. vector-left(i)
    5. cmp(vector(i),vector(j))
    6. goto(1,$\neg$($y_z\wedge\neg y_c$))
    7. end
			
			
			
			\end{lstlisting}
		\end{subfigure}	
		\hspace{.5cm}
		\begin{subfigure}[t]{0.30\textwidth}
			\begin{lstlisting}[mathescape]
    FIBONACCI		
    0. vector-add(a,b)
    1. dec(b)
    2. vector-add(a,b)
    3. set(b,a)
    4. inc(a)
    5. goto(0,$\neg$($ y_z\wedge \neg y_c$))
    6. end
			
			\end{lstlisting}
		\end{subfigure}
		\hspace{.5cm}		
		\begin{subfigure}[t]{0.30\textwidth}
			\begin{lstlisting}[mathescape]
    FIND
    0. cmp(vector(i),vector(t))
    1. goto(3,$\neg ( y_z \wedge \neg y_c)$)
    2. accumulate(a)
    3. inc(i)
    4. goto(0,$\neg ( y_z \wedge \neg y_c)$)
    5. end
			
			\end{lstlisting}
		\end{subfigure}	
		\newline
		\begin{subfigure}[t]{0.30\textwidth}
			\begin{lstlisting}[mathescape]
    GRIPPER
    0. pick(b1,r1,g1)
    1. inc(r2)
    2. move(r1,r2)
    3. drop(b1,r2,g1)
    4. move(r2,r1)
    5. inc(b1)
    6. goto(0,$\neg (y_z \wedge \neg y_c)$)
    7. end
			
			\end{lstlisting}
		\end{subfigure}
		\hspace{.5cm}
		\begin{subfigure}[t]{0.30\textwidth}
			\begin{lstlisting}[mathescape]
    REVERSE
    0. set(i,j)
    1. swap(i,j)
    2. inc(i)
    3. goto(1,$\neg$($y_z\wedge\neg y_c$))
    4. inc(j)
    5. goto(0,$\neg$($y_z\wedge\neg y_c$))
    6. end

			\end{lstlisting}
		\end{subfigure}	
		\hspace{.5cm}
		\begin{subfigure}[t]{0.30\textwidth}
			\begin{lstlisting}[mathescape]
    SELECT
    0. inc(b)
    1. cmp(vector(a),vector(b))
    2. goto(4,$\neg$($\neg y_z\wedge\neg y_c$))
    3. set(b,a)
    4. inc(a)
    5. goto(1,$\neg$($y_z\wedge\neg y_c$))
    6. end
    
			\end{lstlisting}
		\end{subfigure}	
		\newline 
		\begin{subfigure}[t]{0.30\textwidth}
			\begin{lstlisting}[mathescape]
    SORTING
    0. swap(i,j)
    1. inc(i)
    2. goto(0,$\neg$($y_z\wedge\neg y_c$))
    3. cmp(vector(i),vector(j))
    4. goto(7,$\neg$($\neg y_z\wedge y_c$))
    5. dec(i)
    6. goto(0,$\neg$($y_z\wedge y_c$))
    7. swap(i,j)
    8. dec(i)
    9. goto(3,$\neg$($y_z\wedge\neg y_c$))
    10. end
			\end{lstlisting}
		\end{subfigure}
		\hspace{.5cm}
		\begin{subfigure}[t]{0.30\textwidth}
			\begin{lstlisting}[mathescape]
    TRIANGULAR SUM
    0. inc(a)
    1. vector-add(b,a)
    2. vector-dec(a)
    3. test(vector(a))
    4. goto(0,$\neg$($y_z\wedge\neg y_c$))
    5. end

			\end{lstlisting}
		\end{subfigure}	
		\hspace{.5cm}
		\begin{subfigure}[t]{0.30\textwidth}
			\begin{lstlisting}[mathescape]
    VISITALL
    0. visit(i,j)
    1. inc(i)
    2. goto(0,$\neg$($y_z\wedge\neg y_c$))
    3. dec(i)
    4. goto(3,$\neg$($y_z\wedge\neg y_c$))
    5. inc(j)
    6. goto(0,$\neg$($y_z\wedge\neg y_c$))
    7. end
    
			\end{lstlisting}
		\end{subfigure}	
	\end{scriptsize}
	\caption{Solutions computed by {\sc BFGP}$(f_5)$.}
	\label{fig:synthesis}
\end{figure*}

\subsubsection{Validation of the synthesized solutions}
Here we validate the {\sc BFGP}($f_5$) solutions of Figure~\ref{fig:synthesis} with a larger and harder set of  instances. Table~\ref{tab:validation} reports the CPU time, and peak memory, yield when running the solutions synthesized by $BFGP(f_5)$ on a validation set. All the solutions synthesized by $BFGP(f_5)$ were successfully validated, besides {\em Reverse} with infinite detection mode that ran out of memory. The largest CPU-times and memory peaks correspond to the configuration that implements the detection of {\em infinite programs}, which requires saving states to detect whether they are revisited during execution. Skipping this mechanism allows to validate terminating programs much faster~\cite{aguas2020generalized}.

In the validation set, each state variable from the planning domain is bounded by $10^9$, instead of $10^2$ which was the synthesis bound. {\em Corridor} and {\em Gripper} are validated over $1{,}000$ instances, where for each $n \in [12,1{,}011]$, the first has random initial and goal locations below $n$, and the second $n$ balls initially in room A. {\em Fibonacci} has a validation set of $33$ instances, ranging from the $11^{th}$ Fibonacci term to the $43^{rd}$, i.e.~the integer $701{,}408{,}733$ (the $44^{th}$ number would overflow the validation bound). The solutions for {\em Reverse}, {\em Select}, and {\em Find} domains, are validated on $102$ instances each, with vector sizes ranging from $1{,}000$ to $11{,}100$, and random integer elements bounded by $10^9$. Similarly, {\em Sorting} has $100$ validation instances with vectors of random integers, but their sizes range from $12$ to $111$. The solution for {\em Triangular Sum} is validated over $44{,}709$ instances, the last one corresponding to the $44{,}720^{th}$ term in the sequence, i.e.~the integer $999{,}961{,}560$ (as in {\em Fibonacci}, the next number would overflow). In {\em Visitall}, there are $50$ validation instances with squared grids range from $12\times 12$ to $61\times 61$. 

\begin{table}
	\centering
	\begin{small}
		\begin{tabular}{|l|r||r|r||r|r|}\hline
			\textbf{Domain}  & \textbf{Instances} & Time$_\infty$ & Mem$_\infty$ & Time & Mem \\\hline
			Corridor & 1,000 & 0.54 & 6.5MB & \textbf{0.43} & {\bf 6.3MB} \\
			Fibonacci & 33 & \textbf{0.01} & 4.8MB & \textbf{0.01} & \textbf{4.7MB} \\
			Find & 102 & 1,542.85 & 2.2GB & \textbf{1.77} & \textbf{0.3GB}\\
			Gripper & 1,000 & 93.5 & \textbf{0.5GB} & \textbf{4.77} & \textbf{0.5GB} \\
			Reverse & 102 & ME & ME & \textbf{3,553.16} & \textbf{0.5GB} \\
			Select & 102 & 1,407.54 & 2.5GB & \textbf{1.87} & \textbf{0.3GB} \\
			Sorting & 100 & 230.98 & 0.5GB & \textbf{17.19} & \textbf{8.8MB} \\
			Triangular Sum & 44,709 & 3,244.84 & \textbf{0.1GB} & \textbf{2,357.56} & \textbf{0.1GB} \\
			Visitall & 50 & 42.21 & 0.2GB & \textbf{0.33} & \textbf{48MB} \\  \hline
		\end{tabular}
	\end{small}
	\caption{Validation set, CPU-time (secs) and memory peak for program validation, with/out {\em infinite program} detection. ME stands for memory exceeded. Best results in bold.}
	\label{tab:validation}
\end{table}

\subsubsection{Performance of {\sc BFGP} with function combinations}
Interestingly, the base performance of {\sc BFGP} with a single evaluation/heuristic function is improved combining both structural and cost-to-go information; we can guide the search of {\sc BFGP} with a cost-to-go heuristic function and break ties with a structural evaluation function, and vice versa. Thus, we run all configurations of {\sc BFGP}($f_i,f_j$) and {\sc BFGP}($f_j,f_i$) such that $f_i\in \{f_1,f_2,f_3,f_7\}$ and $f_j\in\{f_4,f_5,f_6,f_8,f_9\}$, and select the configuration that solves all domains and with the best average time. There are $40$ {\sc BFGP}($f_i,f_j$)/{\sc BFGP}($f_j,f_i$) configurations, but only {\sc BFGP}($f_5,f_3$) and {\sc BFGP}($f_5,f_7$) are able to solve all domains. The performance of these two configurations is then compared against {\sc BFGP}($f_5$), since it is the only single evaluation/heuristic function that solve all domains in the previous experiment. Table~\ref{tab:h-combined} summarizes that comparison, showing that {\sc BFGP}($f_5$) is improved in every domain either by {\sc BFGP}($f_5,f_3$) or {\sc BFGP}($f_5,f_7$). Furthermore, {\sc BFGP}($f_5,f_7$) has the best average performance in all four metrics, empirically proving that combining goal-oriented functions with {\em structural}-based functions that measure the syntactic complexity of a program, in that specific order, is the best configuration. 

We also compared the performance of $BFGP (f_5,f_7)$, in terms of CPU-time, with the compilation-based approach for GP~\cite{segovia2016generalized,segovia2019computing}. The compilation-based approach, that we named {\sc PP}, computes planning programs, following a top-down strategy, with the planner {\sc LAMA-2011} (first solution setting) to solve the classical planning problems that result from the compilation. Table~\ref{tab:comparisson} summarizes the results of this comparison. There are 3 domains where PP is faster than {\sc BFGP}($f_5,f_7$), but in these domains the GP problems addressed by {\sc PP} are easier: i) {\em Gripper} in PP has the same action move for both directions and picks are only available for the next ball, while in {\sc BFGP} actions are parameterized with pointers, so it first needs to find that pointers $r_1$ and $r_2$ point to rooms A and B respectively, pick balls only from room $r_1$, move from $r_1$ to $r_2$ to drop the ball, and move backwards from $r_2$ to $r_1$; ii) {\em Reverse} in PP has one of the pointers in the last position of vector from the initial state, reproducing this setting in {\sc BFGP} a program of $6$ lines is found in less than 1 second after expanding $260$ nodes and evaluating $2\text{,}560$ nodes, however, it is more interesting to us a solution that synthesizes where to place and how to use each pointer, even though it is a harder problem; and iii) {\em Triangular Sum} in PP just accumulates one variable to another one, while in {\sc BFGP} the pointers should point to the right variables, then use them. In the rest of domains, {\sc BFGP} dominates PP, even though programs are larger, {\sc BFGP} must reason on how to use the pointers, and {\sc BFGP} uses more instances with larger values, being able to solve domains where PP dies because of the grounding among other reasons.

\begin{table}[]
    \centering
  \resizebox{\textwidth}{!}{ 
    \begin{tabular}{|l||c|c|c|c||c|c|c|c||c|c|c|c|} \hline
        \multirow{2}{*}{\bf Domain} & \multicolumn{4}{|c||}{BFGP$(f_5,f_3)$} & \multicolumn{4}{|c||}{BFGP$(f_5,f_7)$} & \multicolumn{4}{|c|}{BFGP$(f_5)$}\\\cline{2-13} 
          & Time  & Mem. & Exp. & Eval. &  Time  & Mem. & Exp. & Eval. & Time  & Mem. & Exp. & Eval.  \\\hline
            Corridor &  675 & {\bf 55} & 1.8M & 2.0M & {\bf 658}  & 64 & {\bf 1.7M} & {\bf 1.9M} & 970 & 80 & 1.9M & 2.1M  \\
			Fibonacci & 990 & 553 & 2.7M & 4.3M & {\bf 55} & {\bf 68} & {\bf 43.4K} & {\bf 0.4M} & 206 & 203 & 0.2M & 1.2M \\
			Find  & {\bf 29} & 16 & {\bf 68.5K} & {\bf 0.1M} & 38 & {\bf 14} & 0.2M & 0.2M & 34 & 23 & 82.9K & 0.2M      \\
			Gripper  & 9 & 17 & 9.7K & 67.4K & {\bf 7} & {\bf 13} & {\bf 8.7K} & {\bf 50.1K} & 10 & 18 & 9.9K & 72.9K  \\
			Reverse  & 702 & 217 & {\bf 2.5M} & {\bf 2.5M} & {\bf 676} & {\bf 184} & {\bf 2.5M} & {\bf 2.5M} & 690 & 356 & {\bf 2.5M} & {\bf 2.5M}   \\
			Select  & {\bf 14} & 10 & {\bf 32.0K} & {\bf 54.0K} & 17 & {\bf 9} & 47.6K & 68.3K & 17 & 12 & 43.1K & 68.3K     \\
			Sorting  & {\bf 2,484} & {\bf 39} & {\bf 1.3M} & {\bf 1.4M} & 2,710  & 82 & 1.5M & 1.5M & 2,693 & 110 & 1.5M & 1.5M  \\
			T. Sum  & {\bf 15} & 7 & {\bf 72.6K} & {\bf 78.0K} & {\bf 15} & {\bf 6} & {\bf 72.6K} & 78.1K & {\bf 15} & 7 & {\bf 72.6K} & 78.1K     \\
			Visitall  & {\bf 2} & {\bf 5} & {\bf 275} & {\bf 605} & 3 & {\bf 5} & 582 & 1.7K & 3 & {\bf 5} & 683 & 2.0K \\\hline
		    Average & 546.7 & 101.9 & 0.9M & 1.2M & {\bf 464.3} & {\bf 49.4} & {\bf 0.7M} & {\bf 0.8M} & 579.6 & 96.8 & 0.8M & 0.9M \\\hline
    \end{tabular} 
		}
    \caption{For each domain we report, CPU time (secs), memory peak (MBs), num. of expanded and evaluated nodes of {\sc BFGP}($f_5,f_3$), {\sc BFGP}($f_5,f_7$) and {\sc BFGP}($f_5$). TO means time-out ($>$ 1h of CPU). Best results in bold.}
    \label{tab:h-combined}
\end{table}

\begin{table}[]
    \centering
    \small
    \begin{tabular}{|l|c|c|} \hline
        {\bf Domain}& PP in sec. & BFGP$(f_5,f_7)$ in sec. \\ \hline 
        Corridor  & - & {\bf 658} \\
        Fibonacci  & 3,570 & {\bf 55} \\
        Find  & 274.86 & {\bf 38} \\
        Gripper  & {\bf 1} & 7 \\ 
        Reverse  & {\bf 87.86} & 676 \\
        Select  & 204.20 & {\bf 17} \\
        Sorting & - & {\bf 2,710} \\
        Triangular Sum & {\bf 0.85} & 15 \\
        Visitall & - & {\bf 3} \\\hline
    \end{tabular}
    \caption{Computing CPU-time (secs) for solving domains in the GP compilation approach (PP) and $BFGP (f_5,f_7)$. }
    \label{tab:comparisson}
\end{table}

\subsection{Validation of GP solutions for more complex domains}
Here we present several GP benchmarks, with known polynomial time solutions, but that result too complex for our current {\sc BFGP} algorithm (within the given time and memory bounds). Our aim is showing that our approach is expressive enough to represent solutions to GP problems coming from IPC planning domains, noise-free supervised classification tasks, and numerical domains. These solutions are succinctly represented as planning programs, instead of long sequences of grounded actions for large problems, and validated efficiently without being affected by the grounding methods of planners. 

\begin{itemize}
    \item {\em Blocks Ontable},  towers of blocks where all blocks must be placed on the table. 
    \item {\em Grid}, an agent has to move from an arbitrary location to a destination one in a 2D grid.
    \item {\em Miconic}, is an elevator problem where passengers at origin, wait for the elevator to enter, and then served at their destination floor.
    \item {\em Michalski Trains}, is a classic of relational supervised machine learning. A binary noise-free classification task with $10$ trains that either go east or west, and multiple features such as the number of wheels, wagons, or their shape for each train among others. The goal is to learn the features that classify all trains in the right direction.
    \item {\em Satellite}, consists of taking images of different targets with instruments that are boarded in satellites. In addition, instruments need to be calibrated and in specific modes for taking each image; and each satellite has only power for one instrument at a time, so it needs to switch the current instrument off, switch on the next and calibrate it, before using a new instrument for taking images.
    \item {\em Sieve of Erathostenes}, is a method to find prime numbers up to a certain bound using only additive and iterative mechanisms.
    \item {\em Spanner}, consists of tighten all loose nuts at the end of a corridor, with the picked spanners along the corridor. Spanners can only be used once, and when the agent moves to the next room it can not go back, so if there are unpicked spanners in visited rooms the task could become unsolvable.
\end{itemize}

Figure~\ref{fig:complex_solutions} shows the hand-coded solutions for these benchmarks. In {\em Blocks Ontable}, given $n$ blocks the complexity of the solution is cubic, i.e. $O(n^3)$, where it searches $n$ times, every $o1$ block that is on top of an $o2$ block, then unstack and put $o1$ down on the table. In {\em Grid}, the agent moves to the bottom left corner, then each coordinate is increased by one while they are smaller than their goal, visiting the resulting coordinate. In {\em Spanner}, an agent picks up all available spanners in location $l1$, walks to the next $l2$ location and repeats the process until it reaches the last location (the gate), collecting all spanners on its path; once in the gate, it tightens each loose nut with a spanner. The solution to {\em Michalski Trains} is summarized as, each train $t1$ will go east if it has a car which is closed and short, otherwise it will go west. In {\em Sieve of Eratosthenes} all numbers are initially classified as primes, and it should decide whether they are not; so it iterates over $i$ and uses $j$ and $k$ as auxiliary pointers, where the first acts as a counter that ranges from $0$ to $i$, and second adds up to the next multiple of $i$, i.e. $k\;\%\;i = 0$; then every $k$-th number will be set to no prime, $i$ is increased by one and the process repeats until $i$ reaches the last element. In {\em Miconic}, the elevator always starts in the first floor $f1$, so for every floor it boards and departs passenger $p1$ whether possible; once it reaches the last floor, all passengers are either served or in the elevator, so it will serve all possible passenger in each floor while it goes down, until the first floor is reached again. The last domain, {\em Satellite}, is the most complex because it requires iteration over multiple variable types, i.e. satellites, instruments, modes and directions. The solution to this domain consists of switching off all instruments and turning all satellites to the first direction; then for each satellite, the $i1$ instrument is switched on, calibrated with its calibration target direction $d2$, and used to take images of every direction $d2$ in every mode $m1$; once it finishes, the satellite turns to the first direction $d1$ again, switches off the current instrument, and continues with the next one, until all satellites have used all their instruments.

We get one main take away lesson from the analysis of Figure~\ref{fig:complex_solutions} solutions; solutions have common high-level structures, that either iterate over all combinations of variable types (i.e. {\em Blocks}, {\em Miconic}, {\em Satellite}, \ldots) or build a complex logic query (i.e. {\em Michalski Trains}). This suggests that planning programs may be synthesized more efficiently using predefined structures (such as {\tt FOR} or {\tt IF-THEN-ELSE} constructs) although this is out of the scope of this paper.

\begin{figure}[hbt!]
    \centering
    \begin{scriptsize}
	\begin{subfigure}[t]{0.33\textwidth}
	\begin{lstlisting}[mathescape]    
    BLOCKS ONTABLE
    0. dec(o2)
    1. goto(0,$\neg$($y_z\wedge \neg y_c$))
    2. dec(o1)
    3. goto(2,$\neg$($y_z\wedge \neg y_c$))
    4. unstack(o1,o2)
    5. put-down(o1)
    6. inc(o1)
    7. goto(4,$\neg$($y_z\wedge \neg y_c$))
    8. inc(o2)
    9. goto(2,$\neg$($y_z\wedge \neg y_c$))
    10. inc(o3)
    11. goto(0,$\neg$($y_z\wedge \neg y_c$))
    12. end
    \end{lstlisting}
    \end{subfigure}
	\begin{subfigure}[t]{0.33\textwidth}
	\begin{lstlisting}[mathescape]    
    GRID
    0. dec(i)
    1. goto(0,$\neg$($y_z\wedge \neg y_c$))
    2. dec(j)
    3. goto(0,$\neg$($y_z\wedge \neg y_c$))
    4. test(goal-xpos(i))
    5. goto(8,$\neg$($y_z\wedge \neg y_c$))
    6. inc(i)
    7. goto(4,$\neg$($y_z\wedge \neg y_c$))
    8. test(goal-ypos(j))
    9. goto(12,$\neg$($y_z\wedge \neg y_c$))
    10. inc(j)
    11. goto(8,$\neg$($y_z\wedge \neg y_c$))
    12. visit(i,j)
    13. end
    \end{lstlisting}
    \end{subfigure}
	\begin{subfigure}[t]{0.33\textwidth}
	\begin{lstlisting}[mathescape]    
    SPANNER
    0. walk(l1,l2,m1)
    1. set(l1,l2)
    2. pickup_spanner(l1,s1,m1)
    3. inc(s1)
    4. goto(2,$\neg$($y_z\wedge \neg y_c$))
    5. dec(s1)
    6. goto(5,$\neg$($y_z\wedge \neg y_c$))
    7. inc(l2)
    8. goto(0,$\neg$($y_z\wedge \neg y_c$))
    9. tighten_nut(l1,s1,m1,n1)
    10. inc(s1)
    11. inc(n1)
    12. goto(9,$\neg$($y_z\wedge \neg y_c$))
    13. end
    \end{lstlisting}
    \end{subfigure}
    \newline
	\begin{subfigure}[t]{0.33\textwidth}
	\begin{lstlisting}[mathescape]    
    MICHALSKI TRAINS
    0. test(hascar(t1,c1))
    1. goto(7,$\neg$($\neg y_z\wedge y_c$))
    2. test(closed(c1))
    3. goto(7,$\neg$($\neg y_z\wedge y_c$))
    4. test(short(c1))
    5. goto(7,$\neg$($\neg y_z\wedge y_c$))
    6. set-eastbound(t1)
    7. inc(c1)
    8. goto(0,$\neg$($y_z\wedge \neg y_c$))
    9. set-westbound(t1)
    10. dec(c1)
    11. goto(10,$\neg$($y_z\wedge \neg y_c$))
    12. inc(t1)
    13. goto(0,$\neg$($y_z\wedge \neg y_c$))
    14. end

    \end{lstlisting}
    \end{subfigure}
	\begin{subfigure}[t]{0.33\textwidth}
	\begin{lstlisting}[mathescape]    
    SIEVE OF ERATHOSTENES
    0. inc(i)
    1. inc(i)
    2. set(k,i)
    3. dec(j)
    4. goto(3,$\neg$($y_z\wedge \neg y_c$))
    5. inc(k)
    6. goto(13,$\neg$($\neg y_z\wedge y_c$))
    7. inc(j)
    8. cmp(i,j)
    9. goto(5,$\neg$($y_z\wedge \neg y_c$))
    10. set-no-prime(k)
    11. cmp(i,j)
    12. goto(3,$\neg$($\neg y_z\wedge y_c$))
    13. inc(i)
    14. goto(2,$\neg$($y_z\wedge \neg y_c$))
    15. end

    \end{lstlisting}
    \end{subfigure}
	\begin{subfigure}[t]{0.33\textwidth}
	\begin{lstlisting}[mathescape]    
    MICONIC
    0. board(p1,f1)
    1. depart(p1,f1)
    2. inc(p1)
    3. goto(0,$\neg$($y_z\wedge \neg y_c$))
    4. dec(p1)
    5. goto(4,$\neg$($y_z\wedge \neg y_c$))
    6. inc(f2)
    7. up(f1,f2)
    8. inc(f1)
    9. goto(0,$\neg$($y_z\wedge \neg y_c$))
    10. dec(f2)
    11. down(f1,f2)
    12. depart(p1,f2)
    13. inc(p1)
    14. goto(12,$\neg$($y_z\wedge \neg y_c$))
    15. dec(p1)
    16. goto(15,$\neg$($y_z\wedge \neg y_c$))
    17. dec(f1)
    18. goto(10,$\neg$($y_z\wedge \neg y_c$))
    19. end

    \end{lstlisting}
    \end{subfigure}
    \newline 
	\begin{subfigure}[t]{0.33\textwidth}
	\begin{lstlisting}[mathescape]    
    SATELLITE
    0. switch_off(i1,s1)
    1. inc(i1)
    2. goto(0,$\neg$($y_z\wedge \neg y_c$))
    3. dec(i1)
    4. goto(3,$\neg$($y_z\wedge \neg y_c$))
    5. turn_to(s1,d1,d2)
    6. inc(d2)
    7. goto(5,$\neg$($y_z\wedge \neg y_c$))
    8. set(d2,d1)
    9. inc(s1)
    10. goto(0,$\neg$($y_z\wedge \neg y_c$))
    11. dec(s1)
    12. goto(11,$\neg$($y_z\wedge \neg y_c$))
    \end{lstlisting} 
    \end{subfigure} 
    \begin{subfigure}[t]{0.33\textwidth}
    \begin{lstlisting}[mathescape]
    
    13. switch_on(i1,s1)
    14. test(cal_target(i1,d2))
    15. goto(19,$\neg$($\neg y_z\wedge y_c$))
    16. turn_to(s1,d2,d1)
    17. calibrate(s1,i1,d2)
    18. turn_to(s1,d1,d2)
    19. inc(d2)
    20. goto(14,$\neg$($y_z\wedge \neg y_c$))
    21. set(d2,d1)
    22. take_image(s1,d2,i1,m1)
    23. inc(m1)
    24. goto(22,$\neg$($y_z\wedge \neg y_c$))
    25. dec(m1)
    26. goto(25,$\neg$($y_z\wedge \neg y_c$))        
    \end{lstlisting}
    \end{subfigure}
    \begin{subfigure}[t]{0.33\textwidth}
    \begin{lstlisting}[mathescape]
    
    27. inc(d2)
    28. turn_to(s1,d2,d1)
    29. inc(d1)
    30. goto(22,$\neg$($y_z\wedge \neg y_c$))
    31. dec(d1)
    32. goto(31,$\neg$($y_z\wedge \neg y_c$))
    33. turn_to(s1,d1,d2)
    34. set(d2,d1)
    35. switch_off(i1,s1)
    36. inc(i1)
    37. goto(13,$\neg$($y_z\wedge \neg y_c$))
    38. dec(i1)
    39. goto(38,$\neg$($y_z\wedge \neg y_c$))
    40. inc(s1)
    41. goto(13,$\neg$($y_z\wedge \neg y_c$))
    42. end
    \end{lstlisting}
    \end{subfigure}
    \end{scriptsize}
    \caption{Solutions to complex domains.}
    \label{fig:complex_solutions}
\end{figure}

\newpage
Table~\ref{tab:complex-validation} shows the validation results in complex domains, where validation without infinite detection scales much better again, and all domains are successfully validated (besides {\em Satellite} with infinite detection mode that runs out of memory). {\em Blocks Ontable} can be solved with $13$ lines and $3$ pointers, and the validation set consists of $20$ instances that range from $12$ to $31$ blocks. {\em Grid} requires $14$ lines of code and $2$ pointers, and it is validated with $248$ instances with grids between $5\times 5$ and $66\times 66$ size. {\em Miconic} needs $20$ lines and $3$ pointers, and $20$ instances that validates from $12$ floors and $18$ passengers to $31$ floors and $46$ passengers. {\em Michalski Trains} uses $15$ lines and $6$ pointers to classify all the trains in the unique classical task with $10$ trains and their features. {\em Satellite} is by difference the most complex in terms of required lines and pointers, which are $43$ and $5$, respectively. Its validation set consist of $20$ instances, starting with $12$ satellites, $24$ instruments and modes, and $48$ directions, and finishing with $31$ satellites, $62$ instruments and modes and $124$ directions. {\em Sieve of Erathostenes} requires $16$ lines and $3$ pointers to classify either as prime or non-prime, all the numbers comprised in the first $111$ natural numbers. {\em Spanner}, uses $14$ lines and $5$ pointers to solve all $20$ instances of the validation set, that range from $18$ spanners and nuts and a corridor with $14$ locations, to $46$ spanners and nuts and a corridor with $33$ locations.

\begin{table}[hbt!]
	\centering
	\begin{small}
		\begin{tabular}{|l|c|r||r|r||r|r|}\hline
			\textbf{Domain}  & $n,|Z|$ & \textbf{Instances} & Time$_\infty$ & Mem$_\infty$ & Time & Mem \\\hline
			Blocks Ontable & 13, 3 & 20 & 4.68 & 42MB & \textbf{0.46} & \textbf{5MB} \\
			Grid & 14, 2 & 248 & 4.49 & 0.2GB & \textbf{1.01} & {\bf 0.2GB} \\
			Miconic & 20, 3 & 20 & 4.00 & 45MB & \textbf{0.08} & \textbf{6.1MB} \\
			Michalski Trains & 15, 6 & 1 & 0.04 & 6.9MB & \textbf{0.00} & \textbf{4.7MB}\\
			Satellite & 43, 5 & 20 & ME & ME & \textbf{177.67} & \textbf{24MB} \\
			Sieve of Erathostenes & 16, 3 & 100 & 5.26 & 16MB & \textbf{0.55} & \textbf{8.8MB} \\
			Spanner & 14, 5 & 20 & 0.91 & 13MB & \textbf{0.04} & \textbf{5.7MB} \\\hline
		\end{tabular}
	\end{small}
	\caption{Validation of complex domains, CPU-time (secs) and memory peak for program validation, with/out {\em infinite program} detection. ME stands for memory exceeded. Best results in bold.}
	\label{tab:complex-validation}
\end{table}

\section{Conclusions}
\label{sec:conclusions}
We believe this work is a step-forward towards building a stronger connection between the areas of {\em automated planning} and programming. The work presented a formalization of classical planning as a vector transformation task, which is a common programming task. According to this formalism, computing a sequential plan for this tasks is computing a composition of vector transformation operations. Likewise computing a generalized plan is computing an algorithmic expression of the vector transformations. With the aim of building more bridges between automated planning and programming, we are exploring the extension of our approach to GP problems that include {\em real} state variables. We believe that we can address this kind of GP problems by introducing the notion of {\em precision} for the comparison of real variables, and redefining accordingly our mechanism for the update of the {\sc FLAGs} registers. 

Another interesting research direction is the extension of our {\em GP as heuristic search} approach for computing generalized plans starting from different input settings. For instance, the computation of generalized plans from a set of {\em plan traces} that demonstrates how to solve several planning problems. We are also interested on exploring the application of our {\em GP as heuristic search} approach to planning problems that are not goal-oriented, where the objective is to maximize a given {\em utility function}~\cite{lipovetzky2015classical}. In this particular setting, ideas from {\em approximated policy iteration}~\cite{bertsekas2011approximate}, and {\em reinforcement learning}~\cite{sutton2018reinforcement}, could be incorporated to our framework. On the other hand, the {\sc BFGP} algorithm starts from the empty program, but nothing prevents us from starting search from a partially specified generalized plan~\cite{bonet:sketches:21} with the aim of developing online approaches to GP. In fact, local search approaches have already shown successful for planning~\cite{gerevini2003planning} and program synthesis~\cite{solar2009sketching,gulwani2017program}. 

Our {\em cost-to-go heuristics} are still less informed than the current heuristics for classical planning, in the sense that our heuristics only consider the goals that are explicitly provided in the problem representation. A clear example is $f_5(\Pi,P_t)$, that builds on top of the {\em Euclidean distance}, and that for \strips\ planning problems is actually a goal counter. We believe that better estimates may be obtained by building on top of the powerful ideas of modern planning heuristics~\cite{hoffmann2003metric,Helmert:FD:JAIR06,frances2017effective}. In more detail, a promising approach for the development of more informative heuristics for GP is to consider sub-goals, that are not explicit given in the problem representation~\cite{hoffmann2004ordered}. For instance sets of sub-goals can be discovered as a pre-processing step, without grounding, regarding the set of {\em relevant} atoms that are traversed by the polynomial {\sc IW(1)} algorithm, when achieving individual goals~\cite{frances2017purely}. 

Since we are approaching GP as a classic tree search, a wide landscape of effective techniques, coming from {\em heuristic search} and {\em classical planning}, can actually improve the base performance of our approach. We mention some of the more promising ones. Large open lists can be handled more effectively  splitting them in several smaller lists~\cite{Helmert:FD:JAIR06}. {\em Delayed duplicate detection} could be implemented to manage large closed lists with magnetic disk memory~\cite{korf2008linear}. Further, more sophisticated mechanism can be implemented for handling closed nodes. For instance, once a search node is cancelled (e.g.~because $f_i(\Pi,\mathcal{P})$ identified that the {\em planning program} fails on a given instance), any program equivalent to this node should also be cancelled, e.g.~any program that can be built with transpositions of the causally-independent instructions. Given that the dept of the search-tree is bounded, techniques coming from SAT/CSP/SMTs, such a {\em non-chronological backtracking}, {\em limited discrepancy search}~\cite{korf1996improved}, or {\em taboo search}~\cite{nowicki1996fast}, might also result effective to improve our approach. Last, SATPLAN planners exploit multiple-thread computing to parallelize search in solution spaces with different bounds~\cite{rintanen2012planning}. This same idea could be applied to multiple searches for GP solutions with different program sizes.


\bibliography{gplanning}

\end{document}